\documentclass{article}
\usepackage{preprint_style,times}
\finalcopy % arXiv preprint: non-anonymous author; no venue header (clean preprint)
\usepackage{microtype}

\usepackage{amsmath,amssymb,amsthm}
\usepackage{hyperref}
\usepackage{url}
\usepackage{booktabs}
\usepackage{multirow}
\usepackage{graphicx}
\usepackage{xcolor}
\usepackage{enumitem}
\graphicspath{{figures/}{./}}

\newcommand{\gamg}{\gamma_g}
\newcommand{\DeltaU}{\|\Delta U\|_F}
\newcommand{\Unew}{U_{\rm new}}
\newcommand{\Uold}{U_{\rm old}}

\newcommand{\RR}{\mathbb{R}}
\newcommand{\EE}{\mathbb{E}}

\newtheorem{theorem}{Theorem}
\newtheorem{proposition}[theorem]{Proposition}

\newtheorem{remark}[theorem]{Remark}

\title{No Subspace to Track:\\
Non-Identifiability and Optimizer State in Low-Rank Training}

\author{Noel Thomas \\
Mohamed bin Zayed University of Artificial Intelligence \\
\texttt{noel.thomas@mbzuai.ac.ae}}

\begin{document}

\maketitle

% Abstract v2 (final push 2026-07-03). Claim -> evidence -> impact. No em dashes.
% Numbers: EC-20 (same-step), EC-22 (N^{-1/4}), EC-24 (84%), EC-23 (flat beta3 row),
% EC-25 (transport 2x2, 3 seeds), EC-04/EC-26 (beta2 ~2 PPL), EC-06 (full-rank control),
% EC-21 (k*). Theorem: appendix Weingarten/symmetric-square.
\begin{abstract}
Memory-efficient optimizers such as GaLore train large language models by projecting gradients
onto a rank-$r$ subspace that is recomputed every $T$ steps, on the assumption that this subspace
is a slowly drifting object that can be tracked. We show that beyond a small reproducible core,
there is no such object. Two estimates of the top-$r$ subspace computed \emph{at the same step}
from disjoint minibatches disagree as much as estimates computed $T$ steps apart ($0.73$ versus
$0.74$ of the maximal chordal distance $\sqrt{2r}$, measured at Pythia-160M with $r{=}128$): the
apparent rotation at every refresh is dominated by estimator noise. This holds across four model families in three
architecture classes and from 70M to 6.9B parameters, strengthening with scale, and more weakly in a vision transformer. Only ${\sim}39$ of $128$ directions are reproducible across minibatches, and averaging
cannot recover the rest: under $N$-fold averaging the gradient's spectral tail shrinks as
$N^{-1/4}$ rather than the $N^{-1/2}$ of pure noise, so no averaging budget makes the subspace
well defined. At Pythia-1B, a basis averaged over ten thousand steps still moves by ${\sim}84\%$ of the
geometric maximum per refresh, and sweeping the averaging strength leaves perplexity essentially unchanged (a $0.36$ perplexity span). What helps instead follows from treating each refresh as a change of
coordinates for Adam's state. Carrying the second moment blindly is provably about $(r{-}k^\star)/2$ worse
than the best rotation-blind estimator, while the first moment transports exactly through the
rotation, the optimal linear map under isotropic gradients and the rule LDAdam uses. In a
three-seed comparison at 1B and 40k steps, the full LDAdam update reaches $18.7$ perplexity at
the default $\beta_2{=}0.999$, beating untransported GaLore after its best $\beta_2$ fix ($19.3$).
Shortening the second-moment memory to $\beta_2{=}0.99$ helps the refreshing optimizers at their
reported recipes, by $1.8$ to $26$ perplexity, though for canonical GaLore the effect is small and
recipe-sensitive; a full-rank control reverses the preference. One measurable fact, subspace non-identifiability,
clarifies why GaLore works, which patches work, and what to check before trusting a low-rank
assumption: the reproducible rank $k^\star$.
\end{abstract}

% Intro v2 (final push). Story: reader believes subspace tracking; the same-step control removes
% the object; remedies follow. No em dashes. Numbers: EC-01/20/21/22/23/24/25 + spectrum cards.
\section{Introduction}

Training large models is increasingly bottlenecked by optimizer memory: Adam stores two moments
per parameter, doubling the footprint of the weights themselves. Low-rank gradient optimizers
such as GaLore~\citep{galore} sidestep this by projecting each gradient $G\in\RR^{m\times n}$
onto the top-$r$ left-singular subspace of $G$, running Adam in the resulting $r$-dimensional
space, and recomputing the subspace every $T$ steps. This matches
full-rank Adam at a fraction of the memory, and a family of refinements has grown around
it~\citep{ldadam,subtrackpp,frugal,golore,apollo}.

Underlying the whole approach is a geometric premise that is rarely stated and, to our knowledge,
was never measured: that the gradient's top-$r$ subspace is a meaningful, slowly varying object
that the refresh tracks. If the premise held, two consecutive estimates of the subspace would nearly
coincide, and carrying optimizer state across a refresh would be a small correction.

We measure the premise and find that beyond a small reproducible core, there is no such object.
The rotation between consecutive subspaces, measured as the chordal distance $\DeltaU$, saturates
its geometric maximum $\sqrt{2r}$ to $96.7$--$99.6\%$ at every refresh and at every rank
$r\in\{32,\dots,256\}$ during training (Pythia-1B~\citep{pythia}; 160M behaves the same): each refresh adopts a
nearly orthogonal frame. The decisive measurement is the one that removes time. On a pretrained
Pythia-160M checkpoint, two estimates of the top-$r$ subspace computed \emph{at the same step}
from disjoint minibatches disagree by $0.725\,\sqrt{2r}$, within a few percent of
the $0.742\,\sqrt{2r}$ the subspace rotates across a full refresh interval of $160$M training (a gap the
trajectory control of \S\ref{sec:noident:samestep} confirms is small at every training state). The
apparent rotation is not motion. It is estimator noise: of the $r{=}128$ directions GaLore
extracts, only $k^\star\approx 39$ are reproducible across two halves of one batch, and the rest
are re-drawn with the sample. (Both checkpoint values sit below the training-time saturation for
a reason the paper quantifies: a pretrained gradient's reproducible spike is stronger, and the
$k^\star$ shared directions pull the two estimates together.)

The cause is spectral, and it is not curable by averaging. The per-matrix gradient spectrum has a
small dominant spike, a few to ${\sim}20$ directions consistent with the tiny gradient subspace
documented for classification by \citet{gurari}, followed by a smooth tail with no gap anywhere near the ranks GaLore uses: at
$r{=}128$ the adjacent singular-value ratio is ${\approx}1.005$, the effective rank straddles the
cutoff, and the top-$r$ block holds only $46$--$68\%$ of the gradient's nuclear mass. A rank cut inside a
gapless bulk returns near-degenerate directions that resample freely (Davis--Kahan). Averaging
$N$ gradients does not open a gap: the deep spectral tail shrinks only as ${\approx}N^{-1/4}$
rather than the $N^{-1/2}$ of pure noise, because it is itself signal with a power-law spectrum,
so no averaging budget makes the top-$r$ frame well defined. GaLore works anyway, and the reason
is unglamorous: it captures a roughly constant fraction of gradient energy in a frame that is
re-drawn, not tracked (\S\ref{sec:energy}).

The redraw has a concrete cost, and the cost has remedies that we can now rank. Adam's second
moment is a slow average with memory $\tau_v=1/(1-\beta_2)\approx 1000$ steps at the default
$\beta_2{=}0.999$; a frame scrambled every $T\ll\tau_v$ steps leaves it chronically misaligned.
The obvious fix, stabilizing the basis by averaging the projection over time, fails exactly as
the geometry predicts: at Pythia-1B a basis averaged over ten thousand steps still moves by
$84\%$ of maximal per refresh (against $97\%$ with no averaging), and final perplexity is essentially flat across
four decades of averaging window (a $0.36$ perplexity span). What works is moving the state, not the frame. The first moment transports
exactly through the rotation $R=\Unew^\top\Uold$; the second moment transports through the
squared entries of $R$, and a carry that ignores $R$ is provably a factor ${\approx}(r{-}k^\star)/2$ worse
than the best blind alternative (\S\ref{sec:remedies}, Appendix~\ref{app:weingarten}). In a
three-seed controlled comparison at 1B, transported state (the LDAdam update, which combines
transport with error feedback) at the default $\beta_2$ reaches
$18.73$ perplexity, better than untransported GaLore after its best $\beta_2$ fix ($19.28$). The
cheaper remedy, shortening the memory to $\beta_2{=}0.99$, helps the refreshing optimizers we test
at their matched recipes, from $1.8$ to $26$ perplexity depending on optimizer and recipe (both the
size and, for canonical GaLore, the sign of the effect are recipe-dependent), while a full-rank
control with no refreshes reverses the preference, isolating the redraw as the cause.

\paragraph{Contributions.}
\begin{itemize}
  \item \textbf{The object is not there.} A same-step control showing that the GaLore subspace's
  apparent per-refresh rotation ($\DeltaU\approx\sqrt{2r}$, saturated $96.7$--$99.6\%$ across ranks
  and scales) equals the disagreement between two simultaneous estimates: the subspace is
  statistically non-identifiable beyond a small reproducible core of $k^\star\approx 39$ of $128$
  directions (\S\ref{sec:noident}). The same holds across four model families (three architecture classes), from 70M to 6.9B
  parameters, and in a vision transformer, strengthening with scale (\S\ref{sec:noident:general}).
  \item \textbf{Why, and why averaging cannot fix it.} The gradient spectrum is gapless at $r$,
  and its deep tail is power-law signal that shrinks under $N$-fold averaging only as
  ${\approx}N^{-1/4}$; $k^\star$ is predictable from the spectrum (within $8\%$ of the direct
  count, with the per-layer ordering matching measured energy retention), and no averaging budget
  opens a gap (\S\ref{sec:noident}). At 1B, ten-thousand-step basis averaging leaves both the
  realized rotation and the final perplexity essentially unchanged (\S\ref{sec:remedies}).
  \item \textbf{What to do instead, with theory and a controlled test.} The optimizer state
  should be transported through each refresh: the first moment by $R$, the second moment by the
  squared entries of $R$, with blind carry provably ${\approx}(r{-}k^\star)/2$ suboptimal
  (Appendix~\ref{app:weingarten}). A three-seed $2{\times}2$ at 1B/40K shows transport at the
  default $\beta_2$ beats the $\beta_2$ fix alone ($18.73$ vs $19.28$), and the two remedies are
  partially independent (\S\ref{sec:remedies}).
  \item \textbf{A prescription and a probe.} Lower $\beta_2$ to $0.99$ for any optimizer that
  refreshes a gradient subspace (validated for canonical GaLore and LDAdam, with APOLLO-Mini as a
  directional consistency check and a full-rank control reversing the preference), and measure
  $k^\star$ before trusting a rank: if $r>k^\star$, the excess directions are noise being
  refreshed (\S\ref{sec:remedies}).
\end{itemize}
The object of study is the gradient subspace these methods compute, not a LoRA-style weight
parameterization (\S\ref{sec:setup}).

% Setup — GaLore-family object, two Adam clocks, ΔU = Grassmannian chordal distance.
% Ported/condensed from proofs/setup.tex + L1. Scoped to GaLore-family.
\section{Setup and notation}
\label{sec:setup}

\paragraph{The GaLore-family optimizer.}
We study optimizers that train a weight matrix $W\in\RR^{m\times n}$ by running Adam inside a
periodically-recomputed low-rank subspace of its gradient. At step $t$ the gradient is
$G_t = \nabla_W \mathcal{L}\in\RR^{m\times n}$. Every $T$ steps the optimizer computes the top-$r$ left
singular subspace of the current gradient, $U_t\in\RR^{m\times r}$ with $U_t^\top U_t = I_r$ (the columns
are the leading left singular vectors of $G_t$), and holds it fixed for the next $T$ steps. Within an
interval it projects each gradient into the subspace, $\tilde G = U^\top G\in\RR^{r\times n}$, applies the
Adam update in that $r$-dimensional space, and maps the result back: $W \leftarrow W - \eta\, U\,\mathrm{Adam}(\tilde G)$.
This is GaLore~\citep{galore}; the same template, \emph{explicitly form a top-$r$ gradient subspace, refresh it
every $T$ steps}, underlies the family that has grown around it~\citep{ldadam,subtrackpp,frugal,golore,apollo}.
Our claims are restricted to this family. LoRA and other low-rank \emph{parameterizations}, which impose a
low-rank structure on the \emph{update} and never compute a gradient subspace, are out of scope.

\paragraph{Adam's two clocks.}
Adam~\citep{adam} carries two exponential moving averages of the projected gradient: the first moment
$m_t = \beta_1 m_{t-1} + (1{-}\beta_1)\tilde g_t$ and the second moment
$v_t = \beta_2 v_{t-1} + (1{-}\beta_2)\tilde g_t^{\,2}$. Each is a low-pass filter with a relaxation time
set by its decay,
\begin{equation}
  \tau_m = \frac{1}{1-\beta_1}\approx 10,\qquad
  \tau_v = \frac{1}{1-\beta_2}\approx 1000\quad(\beta_1{=}0.9,\ \beta_2{=}0.999),
  \label{eq:clocks}
\end{equation}
so the second moment forgets the past $100\times$ more slowly than the first. These two timescales are the
constants against which the refresh interval $T$ must be compared, and the asymmetry between them is the
source of the staleness we analyze in \S\ref{sec:staleness}.

\paragraph{Measuring subspace rotation.}
To quantify how much the basis changes at a refresh we compare the bases before and after,
$\Uold$ and $\Unew\in\RR^{m\times r}$, through their $r$ principal angles $\theta_1,\dots,\theta_r$
(the angles between $\mathrm{range}(\Uold)$ and $\mathrm{range}(\Unew)$). Singular vectors carry an
arbitrary per-column sign, so the raw Frobenius distance $\|\Unew-\Uold\|_F$ is dominated by sign flips
rather than rotation; we align signs first (Appendix~\ref{app:deltau}) and report the sign-corrected
distance
\begin{equation}
  \DeltaU \;=\; \Bigl(2r - 2\textstyle\sum_{i=1}^{r}\cos\theta_i\Bigr)^{1/2},
  \qquad 0 \le \DeltaU \le \sqrt{2r}.
  \label{eq:deltaU}
\end{equation}
This is the chordal (principal-angle) Frobenius distance between orthonormal frames~\citep{grassmannhandbook}: it is $0$ when the
subspace is unchanged and attains its maximum $\sqrt{2r}$ when the two bases are orthogonal
($\theta_i=\pi/2$ for all $i$), i.e.\ when the new subspace shares no direction with the old. We also track the
fraction of gradient energy the subspace captures,
\begin{equation}
  \gamg = \frac{\|UU^\top G\|_F^2}{\|G\|_F^2}\;\in\;[0,1],
  \label{eq:gamma}
\end{equation}
which is what the projection actually preserves and the quantity we return to in \S\ref{sec:energy}.

% §The subspace is not identifiable — replaces §Tumble.
% Order: (1) observed rotation law EC-01; (2) same-step control EC-20 (the reframe);
% (3) no-gap spectrum + Haar/Weingarten (reused from old tumble:gap); (4) k* validation EC-21;
% (5) budget-relative identifiability EC-22 + F2; (6) forward pointer to remedies EC-24.
% Every number traces to a card. Scope: GaLore-family only. No em dashes, no banned words.
\section{The subspace is not identifiable}
\label{sec:noident}
\label{sec:tumble}% alias: legacy cross-references

The subspace GaLore projects onto is recomputed every $T$ steps, and it moves. This section establishes what that
motion is. The measured per-refresh basis change saturates the geometric maximum at every rank and scale
(\S\ref{sec:noident:law}). A same-step control then locates the cause: two top-$r$ estimates taken at the identical
step from disjoint minibatches already disagree by as much as two estimates taken $T$ steps apart
(\S\ref{sec:noident:samestep}). The refresh is therefore not the motion of a stable object. The top-$r$ subspace of
a minibatch gradient is statistically non-identifiable beyond a small reproducible core, at the ranks these
methods use. The rest of the
section shows why: the gradient spectrum has no gap at $r$ (\S\ref{sec:noident:gap}); the identifiable part is small
but measurable and consistent (\S\ref{sec:noident:kstar}); and averaging does not manufacture identifiability
(\S\ref{sec:noident:budget}).

\subsection{A scale-invariant rotation law}
\label{sec:noident:law}

We measure $\DeltaU$ at every refresh during training and find that the subspace rotates almost as far as it
geometrically can. Figure~\ref{fig:deltaU} reports the measured rotation across an $8\times$ range of ranks at
Pythia-1B (exact values in Appendix~\ref{app:deltau}): in every case $\DeltaU$ reaches $96.7$--$99.6\%$ of its maximum
$\sqrt{2r}$, the value attained when the new basis is orthogonal to the old. The dependence on rank is $\DeltaU\approx\sqrt{2r}$: for instance
$\DeltaU(r{=}256)/\DeltaU(r{=}32)=22.10/7.93=2.79$, close to $\sqrt{8}=2.83$. The same holds from Pythia-160M to
1B.
% [RESOLVED: scale range 160M-1B is carded (EC-01); 8B/12B dU were cut as unverified (CUT-04);
%  the ratio law is measured 160M-1B only, so no "across corpora" claim for dU.]
At this saturation the mean cosine, $1-\DeltaU^2/2r$, is below $0.07$: at each refresh the optimizer discards
the frame it was using and adopts a nearly orthogonal one (mean angle $86$--$90^\circ$). (On a pretrained 160M
checkpoint the same redraw sits lower, at $0.74\,\sqrt{2r}$, a mean angle of ${\approx}63^\circ$: a pretrained
gradient's reproducible spike is stronger, and its $k^\star\!\approx\!39$ shared directions pull consecutive
estimates together; \S\ref{sec:noident:samestep}.) This is not slow drift. It is turnover.

\begin{figure}[t]
\begin{center}
\includegraphics[width=0.52\linewidth]{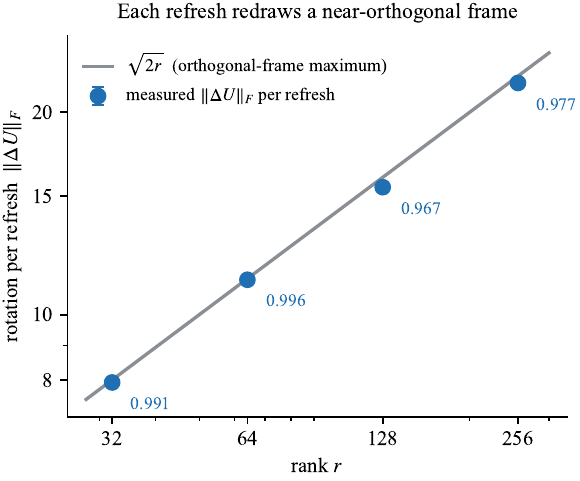}
\end{center}
\caption{Measured subspace rotation per refresh against rank (Pythia-1B, $T{=}50$; log--log). The points sit on the
$\sqrt{2r}$ ceiling, the maximum possible Frobenius distance between two rank-$r$ frames, at every rank (annotated
ratios $0.967$--$0.996$). Each refresh returns a nearly independent subspace. Exact values and the sign correction:
Appendix~\ref{app:deltau}.}
\label{fig:deltaU}
\end{figure}

\subsection{The rotation is estimator noise}
\label{sec:noident:samestep}
\label{sec:tumble:wrongquantity}% alias: legacy cross-references (prior-stability-reports material now lives here)

A near-maximal $\DeltaU$ across a refresh has an obvious reading: the subspace genuinely rotates by a large angle in
$T$ steps. We test that reading with a control that removes time. At a single step we split the batch into two
disjoint halves, compute the top-$r$ subspace of each half's gradient, and measure the distance between the two
(protocol in Appendix~\ref{app:probes}).
Nothing has changed between the two estimates except the sample. If the top-$r$ subspace were a well-defined
property of the gradient, the two same-step estimates would agree.

They do not. Averaged over four layers at Pythia-160M with $r{=}128$, the same-step split-batch disagreement is
$0.725\,\sqrt{2r}$. The across-time rotation over a full refresh interval of training, at the same $160$M scale, is
$0.742\,\sqrt{2r}$ (Figure~\ref{fig:samestep}a), essentially the same-step value: the across-time rotation
exceeds the same-step floor by at most a few percent of $\sqrt{2r}$. That floor is not an artifact of the
pretrained checkpoint at which the control is run: repeated at five points along Pythia-160M training (steps
$1$K to $143$K), the four-layer same-step disagreement stays in $0.64$--$0.71\,\sqrt{2r}$
(Appendix~\ref{app:probes}), lowest early and rising toward the pretrained value, so at every training state
the bulk of the across-time rotation is already present with no time elapsed. The distance a subspace travels in
$T$ steps is essentially the distance two estimates of it already sit apart at a single step, so the ``rotation''
carries little information about elapsed time: it is dominated by estimator noise, not by motion of a stable
object. A direct count confirms the same fact at
the level of individual directions: averaged over the four layers, only $k^\star\!\approx\!39$ of the $r{=}128$
directions the top-$r$ cut returns are reproducible across the split ($48$ on the query-key-value matrix, with
standard error below $1$ over repetitions), and the rest reshuffle with the sample.

This also settles an apparent conflict with reports that the GaLore subspace is stable. Those reports measure
quantities that stay small while $\DeltaU$ is near-maximal. The rank-$r$ subspace has $r$ principal angles, and
many can remain small while the tail angles near $90^\circ$ dominate the chordal distance, so an angle distribution
that looks mostly stable, without conversion to $\DeltaU$, still corresponds to a near-maximal rotation~\citep{frugal}. An incremental tracker is built to move its
estimate slowly, so its self-consistency across steps is a property of the tracker, not of the subspace a fresh SVD
would return~\citep{subtrackpp}. And a raw singular-vector overlap conflates the arbitrary per-column sign of the SVD
with genuine rotation; once signs are aligned (\S\ref{sec:setup}), the rotation is the near-maximal value
of Figure~\ref{fig:deltaU}. None of these is the sign-corrected frame distance $\DeltaU$, which is the quantity that
governs the carried optimizer state.

\subsection{Why: there is no spectral gap at rank \texorpdfstring{$r$}{r}}
\label{sec:noident:gap}
\label{sec:tumble:gap}% alias: legacy cross-references

The recomputed top-$r$ subspace can be reproducible across samples only if the rank-$r$ cut separates
well-determined directions from the rest, that is, if the spectrum has a gap at index $r$. There is not.
Figure~\ref{fig:spectrum} shows the per-matrix gradient spectrum at Pythia-160M and 1B: a small dominant spike,
then a smooth, near-isotropic tail that the rank-$128$ cutoff slices straight through. Across layers, steps, and
both scales:
\begin{itemize}[leftmargin=1.5em,itemsep=1pt,topsep=2pt]
  \item the dominant spike is small: a few to ${\sim}20$ directions hold the first half of the gradient's
  squared singular-value (Frobenius) mass, consistent with the tiny gradient subspace documented for classification by \citet{gurari};
  \item there is no gap at the cutoff: the adjacent singular-value ratio at $r{=}128$ is $\approx 1.005$ on average,
  and no jump anywhere in the bulk exceeds ${\sim}3\%$ ($\max$ ratio $1.011$--$1.030$);
  \item the effective rank (entropy of the normalized spectrum) straddles the cutoff, ranging $89$--$358$ across
  layers and both scales, so $r{=}128$ sits inside the flat tail, not at a boundary;
  \item the top-$r$ block captures only $46$--$68\%$ of the gradient's singular-value (nuclear) mass, leaving
  roughly a third to over half outside the subspace at all times. (The spike dominates the squared mass while the long flat
  tail dominates the nuclear mass; the contrast between the two norms is itself a signature of the tail's
  flatness.)
\end{itemize}
This agrees with independent measurements: \citet{gurari} establish the small dominant spike directly, and a
concurrent large-model analysis reports the same spike-tail profile with the spike at ${\sim}1.5\%$ of
directions~\citep{spectra}, a separation at the spike rather than at $r{=}128$. The
complementary evidence from the randomization side, that past the spike the SVD frame carries little more than a
random one, is taken up in \S\ref{sec:energy} and \S\ref{sec:related}.

\begin{figure}[t]
\begin{center}
\includegraphics[width=0.95\linewidth]{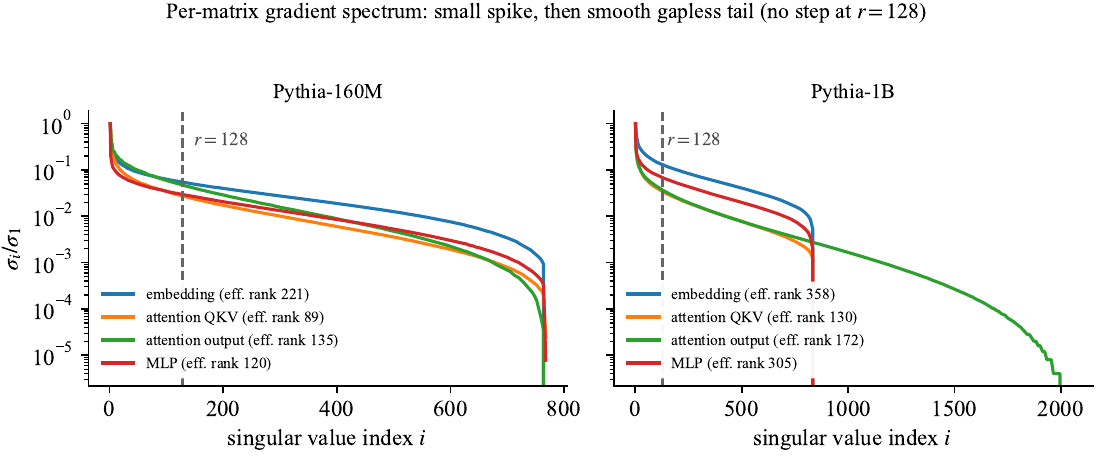}
\end{center}
\caption{Per-matrix gradient singular spectrum at Pythia-160M (left) and 1B (right): the reason the subspace is not
identifiable. A small dominant spike (a few to ${\sim}20$ directions holding half the squared mass) gives way to a
smooth, near-isotropic tail with no gap. The GaLore cutoff $r{=}128$ (dashed) falls inside the flat region, where
the adjacent singular-value ratio is $\approx 1.005$ and the top-$r$ block holds only $46$--$68\%$ of the gradient's
nuclear mass. The rank-$r$ subspace is not separated from its complement.}
\label{fig:spectrum}
\end{figure}

\paragraph{From no gap to Haar-random redraw.}
When singular directions are near-degenerate they are maximally sensitive to perturbation: the Davis--Kahan and
Wedin $\sin\theta$ theorems bound a subspace's rotation by the perturbation size over the gap that isolates
it~\citep{davis_kahan,wedin}. As the gap at $r$ vanishes, that bound diverges, and the SVD is free to return any
frame within the degenerate block. Sub-threshold directions therefore behave as a Haar-random draw within the
tail, whether the perturbation is a step of training or a change of minibatch. This is why two same-step estimates
disagree as much as two across-time estimates (\S\ref{sec:noident:samestep}). The same model predicts the redraw
distance approaches the $\sqrt{2r}$ ceiling as $r/d\to 0$ (Appendix~\ref{app:deltau_scaling}). At 160M, where
$r/d=1/6$, it stays below the ceiling, matching the measured $0.74\,\sqrt{2r}$. The Weingarten lower bound of \S\ref{sec:staleness}
follows for the carried second moment under the same Haar model (Theorem~\ref{thm:blind_lower_bound},
Appendix~\ref{app:weingarten}). The stable part of the geometry
is real but small: the spike of \citet{gurari} is genuine, and GaLore's $r{=}128$ overshoots it by an order of
magnitude, reaching past the determined directions into the gapless bulk that redraws.

\subsection{The identifiable part is small, measurable, and consistent}
\label{sec:noident:kstar}

The same-step count gives a direct measure of how many directions survive resampling: a four-layer mean of
$k^\star\!\approx\!39$ of $128$, ranging from $48$ on the query-key-value matrix down to $31$ on the attention
output, with a standard error below $1$ over repetitions, so the count is stable rather than noise. Reading the
same data a second way reproduces the pattern: inverting the measured spectral shape through a detectability
threshold fixed a priori, with one calibrated noise scale per layer and no shape parameters, recovers the same
layer ordering, $\text{qkv} > \text{mlp} > \text{embed} > \text{dense}$ (inversion $42,37,19,17$ against the
direct $48,43,34,31$). Both readings derive from the same measurement, so this is a consistency check between two
estimators rather than an independent prediction; both place the reproducible core well below $r{=}128$. The
ordering matches the measured energy the top-$r$ block retains in each layer ($73$--$81\%$ for qkv and mlp versus
$42$--$48\%$ for embed and dense): layers whose gradient concentrates more have more reproducible directions.
$k^\star$ is thus a measurable property, not an artifact of one estimator.

\subsection{Averaging does not rescue it}
\label{sec:noident:budget}

Identifiability might be recovered by averaging: with a large enough gradient sample the noisy tail would shrink and
a gap could open at $r$. It does not shrink fast enough. Under $N$-fold gradient averaging, pure sampling noise
falls as $N^{-1/2}$, so a noise singular value scaled by $\sqrt{N}$ would stay flat. The deep spectral tail instead
shrinks only as $\approx N^{-1/4}$: across all four probed layers the scaled tail value grows by $2.0$--$3.2\times$
from $N{=}1$ to $N{=}64$ (Figure~\ref{fig:samestep}). The tail is not noise being averaged away. It is signal, with
a de-censored power-law spectrum of exponent $\alpha=1.03$--$1.35$, obtained two independent ways (a direct tail fit
and the overlap-versus-$N$ curve). A shuffled-document control, which breaks within-document correlation, retains
most of the anomaly ($1.7$--$2.2\times$ against the $2.0$--$3.2\times$ of the true pool), so the persistent tail is
mostly genuine gradient signal, with only a minor part attributable to within-document correlation.

We state the limits of this claim plainly. We do not claim a closed-form law for how the subspace overlap grows
with $N$: the naive spiked-covariance prediction fails out of sample (mean absolute error $0.073$), so we report
only the measured scaling, not a formula. That scaling is enough for the conclusion. Because the tail is signal that
decays as a power law and shrinks under averaging only as $N^{-1/4}$, no finite averaging budget opens a gap at $r$,
and no rank $r$ lands on a gap that is not there. The top-$r$ subspace GaLore uses is not well-defined, and no averaging budget we test recovers it; this holds from 160M to 2.8B and across GPU backends (Appendix~\ref{app:probes}).

\begin{figure}[t]
% BUILD FROM: /Users/noel.thomas/BPDR/icml_27/analysis/data/samestep2_160m.jsonl
%   (fields per record: N in {1,2,4,8,16,32,64}; layer in {embed_in, attention.dense,
%    attention.query_key_value, mlp.dense_h_to_4h}; dU_over_sqrt2r; overlap; spec_vals (singular
%    spectrum); spec_idx). Fit/de-censoring done by icml_27/analysis/g3_stage2.py.
% PANEL (a) — averaging does not close the gap: x = N (log2 axis), y = same-step split-batch
%   dU_over_sqrt2r, one line per layer (4 layers). Add a horizontal dashed reference at 0.742
%   (across-time rotation, EC-20) and mark the N=1 four-layer mean 0.725. Curves stay near the
%   reference and fall only slowly with N.
% PANEL (b) — the tail is signal, not noise: for the deep spectral tail (flat region well past
%   r=128), plot the tail singular value scaled by sqrt(N), s*sqrt(N), versus N in log-log, one
%   line per layer. A pure-noise tail would be flat (s ~ N^-1/2); the observed lines rise by
%   2.0-3.2x from N=1 to 64 (slope ~ +1/4, i.e. s ~ N^-1/4). Overlay the fitted de-censored
%   signal power law with exponent alpha in [1.03, 1.35].
\begin{center}
\includegraphics[width=0.95\linewidth]{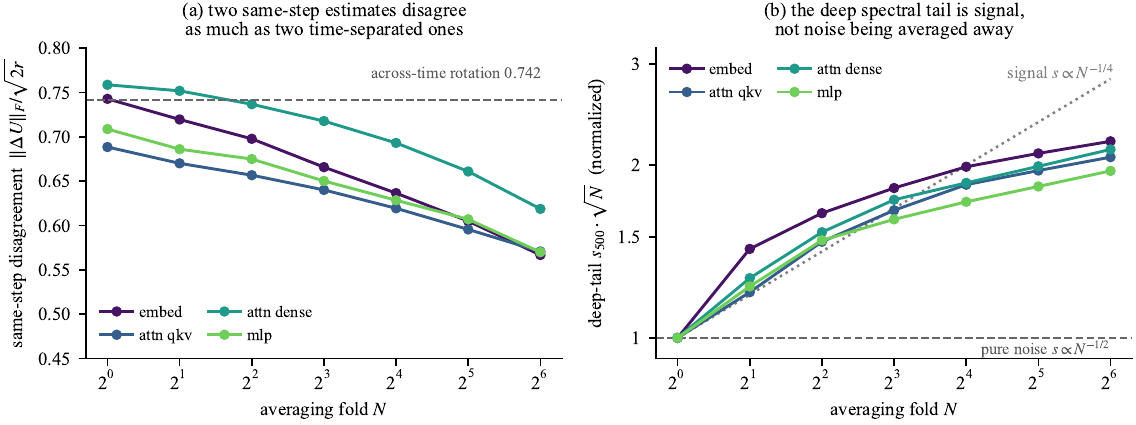}
\end{center}
\caption{Averaging cannot make the top-$r$ subspace identifiable (Pythia-160M, $r{=}128$, four layers).
\textbf{(a)} Same-step split-batch subspace disagreement $\DeltaU/\sqrt{2r}$ against averaging fold $N$. At $N{=}1$
it is $0.725$, indistinguishable from the across-time rotation $0.742$ (dashed), and it falls only slowly with $N$.
\textbf{(b)} The deep tail singular value scaled by $\sqrt{N}$. Pure noise would be flat; the measured tail grows
$2.0$--$3.2\times$ from $N{=}1$ to $64$, so it shrinks as $\approx N^{-1/4}$, not $N^{-1/2}$. The tail is signal
with power-law exponent $\alpha=1.03$--$1.35$, so no averaging budget opens a gap at $r$.}
\label{fig:samestep}
\end{figure}

\subsection{The non-identifiability is general}
\label{sec:noident:general}

The same-step control is measured at Pythia-160M, but the effect is not specific to Pythia, to language, or to
GaLore. We ran the same split-batch probe on eleven pretrained checkpoints, all trained with full-rank Adam: seven
Pythia scales (70M--6.9B), GPT-2-large, Qwen2.5-3B, Llama-3-8B, and a vision transformer (ViT-B) probed on
CIFAR-10. In every language model the same-step disagreement is $0.67$--$0.83$ of the maximum $\sqrt{2r}$, with no
spectral gap at $r$ (mean adjacent ratio $1.004$--$1.009$); it rises with scale
(Figure~\ref{fig:generality}) as the reproducible core shrinks, from $k^\star\!\approx\!50$ at 70M to
$k^\star\!\approx\!21$ at 6.9B. The effect crosses architecture classes (GPT-NeoX, GPT-2, and the Llama-class
Qwen2.5 and Llama-3) and holds, more weakly, in the vision transformer ($0.53$ of maximal on CIFAR). Because every
model here is trained with full-rank Adam, the non-identifiability is a property of the gradient itself, not of the
low-rank projection: the top-$r$ subspace any method would extract is estimator noise past a small core, at every
scale we measured. Per-model numbers and the protocol are in Appendix~\ref{app:generality}.

\begin{figure}[t]
\begin{center}
\includegraphics[width=0.62\linewidth]{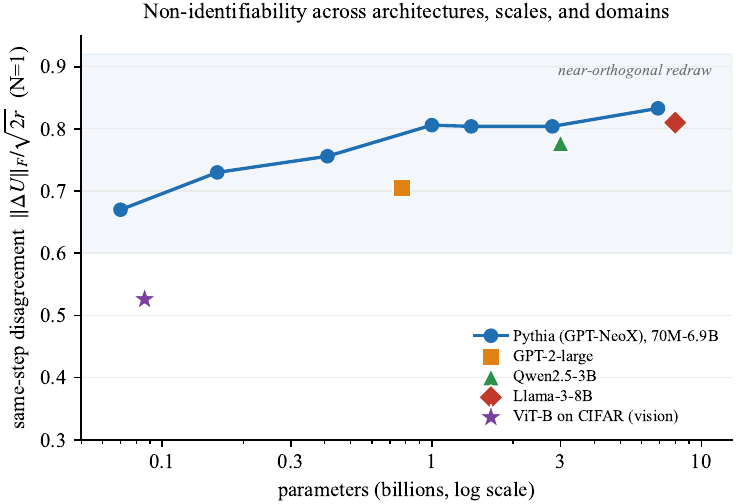}
\end{center}
\caption{Same-step subspace disagreement $\DeltaU/\sqrt{2r}$ at $N{=}1$ (split-batch, pretrained checkpoints,
$r{=}128$, mean over four layer types) across architectures, scales, and domains. It rises with scale in Pythia
(70M--6.9B), sits in the same band for GPT-2-large, Qwen2.5-3B, and Llama-3-8B, and holds more weakly for a vision
transformer on CIFAR. Every model is full-rank-Adam-pretrained, so the non-identifiability is a property of the
gradient, not of GaLore; no spectral gap at $r$ appears in any of them.}
\label{fig:generality}
\end{figure}

\paragraph{Forward pointer.}
This scaling is measured at Pythia-160M, where the tail is already flat at $r{=}128$. At 1B, where $r{=}128$ lies
deeper in the tail of a wider gradient, it predicts that averaging successive bases across refreshes cannot
stabilize the subspace either. We verify this directly, after the energy interlude of \S\ref{sec:energy}, with the
second-moment remedies at 1B in \S\ref{sec:staleness}: basis averaging leaves the per-refresh rotation near its
maximum, while the two remedies that
do help act on the carried optimizer state rather than on the subspace.

% §Energy — why GaLore works anyway. MODEST / qualitative. Coverage prop (L3) + GoLore corroboration.
\section{Why GaLore still works}
\label{sec:energy}

If the subspace it projects onto is replaced by a nearly orthogonal one every $T$ steps, why does GaLore train as
well as it does? The answer is that the projection's job is not to track a fixed subspace but to capture gradient
energy, and how much it captures barely depends on \emph{which} subspace it picks. The descent contribution of a
projection is governed by the fraction of gradient energy it retains, $\gamg$ (Eq.~\ref{eq:gamma}). For a subspace
$U$ spanning $r$ directions of the gradient second-moment, the expected coverage is the cumulative spectral mass
it covers,
\begin{equation}
  \EE[\gamg] \;=\; \frac{\sum_{i=1}^{r}\lambda_i}{\sum_{i=1}^{n}\lambda_i},
  \label{eq:coverage}
\end{equation}
where $\lambda_1\ge\cdots\ge\lambda_n$ are the spectrum's energies (Appendix~\ref{app:coverage}). This identity is
an expectation over the gradient covariance; the per-matrix SVD energy fractions reported below are the
instantaneous quantity it matches in expectation. In the presence
of a sharp gap, only the gap-defined subspace captures the dominant mass and a rotation away from it would be
costly. But \S\ref{sec:tumble:gap} showed the tail past the small spike is smooth and globally
near-isotropic, with no gap anywhere in the bulk rather than merely at $r$; it is this global flatness, not the
absence of a single gap at $r$, that the argument needs. In that regime Eq.~\ref{eq:coverage} is nearly flat in
the choice of subspace: any $r$ directions that include the spike and otherwise sample the tail capture a
comparable fraction. The freshly-recomputed basis,
near-orthogonal to its predecessor, is therefore an essentially equivalent projector: measured during 1B
training with $r{=}128$, GaLore retains a roughly constant $\gamg\approx 0.67$--$0.73$ of the squared gradient
throughout, regardless of the redraw. (The nuclear fraction is lower, $46$--$68\%$, because the flat tail
carries more of the nuclear mass; \S\ref{sec:tumble:gap}.) \citet{golore}
reach a compatible conclusion from the convergence side: once the true gradient shrinks relative to its noise, the
SVD-selected subspace loses its advantage over a random one, and a projection chosen ``without any preference''
suffices in its place. Our diagnostic also reconciles a tension in more recent GaLore-family work.
\citet{golore} and a randomized Grassmann variant~\citep{randsubspaces2510} replace the exact SVD projection
with a random one and report training at least as good as GaLore, the latter arguing that flat curvature makes
the precise subspace unnecessary; GaLore-2~\citep{galore2} instead finds that a purely random projection
degrades quality and retains a periodic SVD in a hybrid schedule. The reproducible rank $k^\star$ reconciles
them: the exact SVD earns its cost only to the extent the subspace is identifiable, and is redundant once
$k^\star\ll r$. Because the subspace grows \emph{less} identifiable as training proceeds (the same-step floor
rises across the trajectory; Appendix~\ref{app:probes}), a random projection is safest late in training, while
any residual advantage of the exact SVD is concentrated early. This is a quantitative form of the
flat-curvature intuition, and of the boundary the hybrid schedule approximates.
Low-rank projection is known to introduce a per-step bias relative to full Adam~\citep{golore,gum};
our account is orthogonal, since we do not claim the projection is unbiased, only that the \emph{choice} of which $r$ directions
it selects barely changes the energy captured.

This reframes what GaLore is. It is not tracking a persistent low-rank structure in the gradient; there is none
to track at rank $r$. It is a rotating energy-capture projector: a frame that turns over completely at every
refresh while preserving the energy fraction that drives descent. The descent direction survives the redraw.
What does not survive is the optimizer state carried across the rotation, and, as we show next, the second moment
in particular pays for it.

% §What works and what does not. Replaces the old §Staleness.
% Survivors (rewritten to the one idea): the full β₂ validation (LDAdam EC-03, canonical length caveat
% EC-04, APOLLO EC-05) and the full-rank causal control (EC-06). New: averaging fails (EC-23/24),
% transport 2×2 headline table (EC-25), gasket-seed β₂ (EC-26). Numbers trace to INVENTORY cards.
\section{What works and what does not}
\label{sec:remedies}
\label{sec:staleness}% alias: incoming cross-references from other sections target the old staleness label

If the rank-$r$ frame that GaLore projects onto is redrawn nearly orthogonally at every refresh
(\S\ref{sec:tumble}), the optimizer state carried across that refresh points at a target that no longer
exists. The first moment survives the move: it is a vector in the subspace and transports exactly by the
rotation $R=\Unew^\top\Uold$ (Proposition~\ref{prop:mmse}), the change of basis that LDAdam~\citep{ldadam}
and SUMO~\citep{sumo} apply. The second moment does not. It is a per-coordinate variance whose optimal new-basis value passes
through the \emph{squared} entries of $R$, and a carry blind to $R$ sits a factor $\approx (r{-}k^\star)/2$ above the best
any blind estimator can reach (Theorem~\ref{thm:blind_lower_bound}). With memory
$\tau_v=1/(1-\beta_2)\approx 1000$ steps at the default and refreshes every $T\approx 50$--$200$ steps, the
second moment averages statistics from many dead subspaces and is chronically misaligned; its lag grows with
$\tau_v$ on the same curve as full-rank Adam (Appendix~\ref{app:overlay}). Three remedies
present themselves: average the basis so it stops moving, transport the state through the rotation, or shorten
the second moment's memory. We test all three at Pythia-1B. Averaging fails; transport and a shorter memory
both work, and we measure why.

\subsection{Averaging the basis does not work}
\label{sec:remedies:averaging}

The direct fix for a subspace that moves too much is to stop it from moving. We build the EMA-basis variant:
instead of the top-$r$ singular subspace of the current gradient, the basis is the top-$r$ eigenvectors of an
exponential moving average of $GG^\top$ with decay $\beta_3$, so that $\beta_3{=}0$ recovers GaLore exactly
and larger $\beta_3$ averages the subspace over a growing window. At Pythia-1B trained from scratch on WikiText~\citep{merity2017pointer}
($r{=}128$, $T{=}160$, learning rate $4\times10^{-3}$, warmup $4000$, $40$K steps, AMD MI210, one seed), the
final perplexity does not move with $\beta_3$. At the default $\beta_2{=}0.999$ the row is flat:
$\{21.088,\,21.025,\,21.074,\,21.026,\,21.385\}$ for $\beta_3=\{0,\,0.9,\,0.99,\,0.999,\,0.9999\}$, spanning
$0.36$ across four decades of averaging window ($0.06$ excluding the longest, $10^4$-step window, which alone
sits $0.36$ above the minimum). The flat row replicates
on a second cluster (NVIDIA L40S, same recipe): $20.39$ at $\beta_3{=}0$ against $20.52$ at $\beta_3{=}0.99$
(we compare only within a cluster, never across).

The measurement of why is direct. Averaging the covariance does slow the basis, but not enough to matter: late
in training (steps $\ge 20$K) the mean realized rotation is $\DeltaU = 15.48,\,14.51,\,13.42$ for
$\beta_3 = 0,\,0.999,\,0.9999$, against the ceiling $\sqrt{2r}=16.0$. Even a $10{,}000$-step average of
$GG^\top$ leaves the refresh-to-refresh basis change at $84\%$ of maximal (against $97\%$ with no averaging). Basis averaging cannot stabilize
what is not identifiable: past the dominant spike the spectrum is gapless (\S\ref{sec:tumble:gap}), so no amount
of temporal averaging carves a stable rank-$128$ frame out of the isotropic tail.

\subsection{Transporting the state works}
\label{sec:remedies:transport}

If the basis will not hold still, move the state with it. Table~\ref{tab:transport} crosses the two arms that
matter, $\{$carry, transport$\}$ against $\{\beta_2{=}0.999,\ \beta_2{=}0.99\}$, three seeds per cell at
Pythia-1B/$40$K. This table uses warmup $1440$; its carry-$\beta_2{=}0.999$ cell ($22.07$) therefore sits about
$0.8$--$1.0$ PPL above the same nominal configuration in the averaging and gasket experiments (warmup $4000$: $21.09$
at one seed, \S\ref{sec:remedies:averaging}, and $21.27$ over three seeds, \S\ref{sec:remedies:beta2}), purely
through the warmup difference. Magnitudes shift between recipes; directions do not. We compare $\beta_2$ and state
policies only within a fixed recipe, never perplexities across recipes. Read the table three ways, in order. First, transporting the state at the default $\beta_2{=}0.999$
reaches $18.73$, better than lowering $\beta_2$ without transport ($19.28$): moving the state is worth more than
the hyperparameter fix on its own. Second, the second-moment cost of the wrong $\beta_2$ under a plain carry is
large, $22.07$ against $19.28$: a $2.79$ PPL gap, about twenty times the standard error of the
difference (two-sample $t\approx 19$ on $4$ degrees of freedom). Third, that gap does not vanish once the
state is transported: it shrinks to $1.81$ but persists; the two remedies are therefore partially independent
rather than substitutes. (The transport row is the LDAdam update, so this $1.81$ is the same measurement that
recurs below as LDAdam's matched-$40$K $\beta_2$ gap, \S\ref{sec:remedies:beta2}.) One caveat is mandatory: the transport arm is the full LDAdam update, which applies the
change-of-basis transport of both moments together with error feedback, and this experiment does not separate
the contribution of transport from that of error feedback. The transport rule itself is LDAdam's; our
contribution here is the optimality statement behind it (Proposition~\ref{prop:mmse},
Theorem~\ref{thm:blind_lower_bound}) and the controlled $2\times2$ comparison, not the rule.

\begin{table}[t]
\caption{Final validation perplexity at Pythia-1B under the two independent remedies: transporting the
optimizer state across each refresh and shortening the second moment's memory. Rows are the state policy at a
refresh; columns are $\beta_2$. Each cell is mean$\pm$standard deviation over three seeds. Recipe (identical
across cells): Pythia-1B trained from scratch on WikiText, rank $r{=}128$, refresh interval $T{=}160$, learning
rate $4\times10^{-3}$, warmup $1440$, $40$K steps. ``Carry'' is canonical GaLore (the second moment is carried
unchanged across refreshes); ``transport'' is the LDAdam update~\citep{ldadam} (change-of-basis transport of
both moments plus error feedback). Transporting at the default $\beta_2{=}0.999$ ($18.73$) already beats lowering
$\beta_2$ alone ($19.28$); the carry $\beta_2$ gap is $2.79$, about twenty times the standard error of the
difference, and persists under transport ($1.81$). Transport here is the full LDAdam update; the table does not separate state transport from error
feedback.}
\label{tab:transport}
\begin{center}
\begin{tabular}{lcc}
\toprule
State at refresh & $\beta_2{=}0.999$ (default) & $\beta_2{=}0.99$ \\
\midrule
Carry (canonical GaLore)        & $22.07\pm0.17$ & $19.28\pm0.18$ \\
Transport (LDAdam rule)         & $18.73\pm0.13$ & $16.92\pm0.16$ \\
\bottomrule
\end{tabular}
\end{center}
\end{table}

\subsection{A shorter memory works, with a recipe caveat}
\label{sec:remedies:beta2}
\label{sec:staleness:val}% alias: appendix/conclusion reference the old validation-subsection label

The second remedy, lowering $\beta_2$ from $0.999$ to $0.99$, is the cheaper of the two, and it helps the
refreshing optimizers we test at their matched recipes. For canonical GaLore the effect is small (about $2$
perplexity) and hardware-sensitive: it is three-seed robust on our primary cluster, while a matched-recipe run on a
different GPU backend does not reproduce it, consistent with the projection's periodic SVD being ill-conditioned
and resolving differently across backends. We therefore present the shorter memory as the secondary, less robust
remedy, behind state transport. The forgetting-factor reason is standard~\citep{guoljung}: when the target is
reset every $T\ll\tau_v$ steps, a long memory keeps integrating pre-rotation statistics that are now bias, and
the error-minimizing forgetting factor drops below the no-reset optimum. A bias--variance argument specialized to
a self-reset target makes the direction precise (Appendix~\ref{app:forgetting}): the minimizer is interior and
moves down as the refresh interval shrinks. Empirically the $\beta_2$ gain does not
depend on the averaging window of the previous experiment: across the $\beta_3$ grid the gap between
$\beta_2{=}0.999$ and $\beta_2{=}0.99$ is $1.94,\,2.04,\,2.16,\,2.27,\,2.13$ PPL at
$\beta_3=0,\,0.9,\,0.99,\,0.999,\,0.9999$ (seed 0), so averaging neither substitutes for the memory fix nor
interferes with it. At the gasket recipe repeated over seeds (same recipe, warmup $4000$), canonical GaLore
gives $21.27\pm0.17$ at $\beta_2{=}0.999$ against $19.04\pm0.13$ at $\beta_2{=}0.99$ (three seeds each), a
$2.23$ PPL gap.

\paragraph{Across the family.} The direction holds for every optimizer we tried
(Figure~\ref{fig:beta2}; exact values and configurations in Appendix~\ref{app:configs},
Table~\ref{tab:beta2}). For LDAdam, whose error feedback makes the second moment's calibration load-bearing,
lowering $\beta_2$ from $0.999$ to $0.99$ improves perplexity from $57.69\pm0.11$ to $31.27\pm0.31$ over three
seeds ($10$K steps, learning rate $10^{-3}$): a $26.4$ PPL gap against per-cell seed spreads of $0.1$--$0.3$,
two orders of magnitude larger than the noise. The length caveat below is specific to canonical GaLore's small
margin; the LDAdam gap is present at both horizons, $26.4$ at its own $10$K recipe and $1.81$ at the common
$40$K recipe (Table~\ref{tab:transport}), so it is not a short-run artifact. We do not rank optimizers by gap
size: at the common recipe LDAdam's gap ($1.81$) is smaller than canonical GaLore's ($2.79$), and the $26.4$
belongs to its own recipe only. APOLLO-Mini moves from $97.83/101.77$ at
$\beta_2{=}0.999$ to $96.76/97.20$ at $\beta_2{=}0.99$ (two seeds); it resamples a random Gaussian projection
every $T$ rather than an SVD top-$r$ subspace, so two consecutive frames are near-orthogonal by construction
and the refresh geometry differs. We read APOLLO as a directional consistency check only, not a third strong
leg: two seeds, small margins, high absolute perplexity. The direction is nonetheless the same.

\paragraph{A length caveat, stated plainly.} For canonical GaLore the margin is training-length dependent. It
is present at $40$K (above) but absent, even slightly reversed, in shorter $10$--$20$K runs: the staleness cost
accrues over rotations, and the second-moment bias-correction transient occupies a larger fraction of a short
run. The effect is real but integrates with training; short-horizon tuning understates it.

\paragraph{The causal control.} The check that ties the $\beta_2$ effect to the refresh, and not to these
training runs in general, is full-rank AdamW: the same model, data, and schedule with no subspace refresh.
There the preference reverses to the standard $\beta_2{=}0.999$ at both learning rates tested ($97.9$ against
$118.0$ at learning rate $6\times10^{-4}$, and $158.3$ against $200.9$ at $10^{-3}$; $14.4$K steps, one seed),
exactly as the forgetting-factor argument predicts when the target is never reset. The control is directional
(one seed, two learning rates, shorter horizon) but the margins are far outside seed noise, and its absolute
perplexities are not comparable to the $40$K GaLore runs above. The lower-$\beta_2$ advantage is therefore
specific to subspace refreshing. A second moment that a refresh periodically invalidates is what makes $0.999$
the wrong default.

\begin{figure}[t]
\begin{center}
\includegraphics[width=0.56\linewidth]{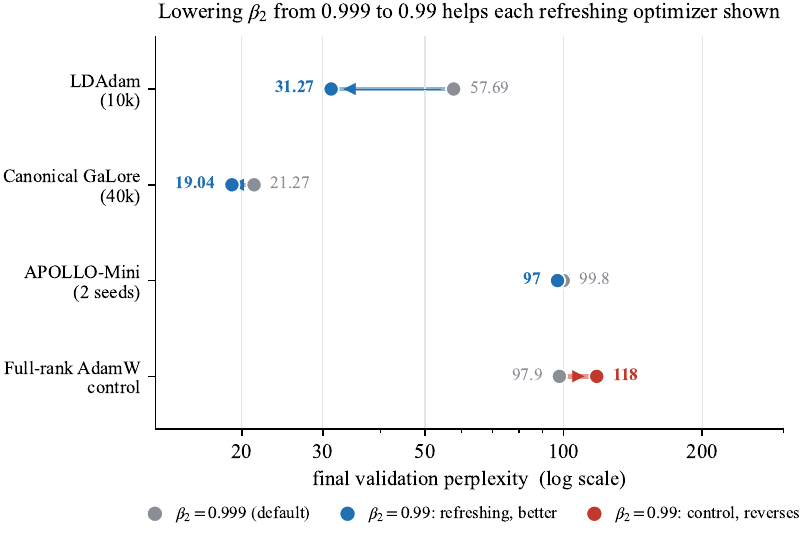}
\end{center}
\caption{Lowering $\beta_2$ from the $0.999$ default to $0.99$ at Pythia-1B (final validation perplexity, log
scale). The axis is logarithmic and each optimizer sits at its own training recipe (steps, learning rate, and
length differ across the three, and the full-rank control); the comparison is within-optimizer, where each $\beta_2$
pair shares its recipe, not across optimizers. For each refreshing optimizer shown, the lower value is better: LDAdam by $57.69\to31.27$ at its own $10$K recipe
(three seeds; at the common $40$K recipe its gap is $1.81$, Table~\ref{tab:transport}), canonical GaLore by
$21.27\to19.04$ (three seeds), APOLLO-Mini directionally (two seeds). Gap sizes are not comparable across
optimizers because recipes differ. The full-rank AdamW control, which runs with no subspace refresh, reverses
the preference back to $0.999$, directionally isolating the periodic redraw as the cause.
The canonical-GaLore advantage is flat across the EMA-basis averaging window ($\beta_2$ gap
$1.94$--$2.27$ PPL over $\beta_3\in\{0,\dots,0.9999\}$, \S\ref{sec:remedies:averaging}), so a shorter memory and
basis averaging act independently. Exact values, seed counts, and recipe: Appendix~\ref{app:configs},
Table~\ref{tab:beta2}.}
\label{fig:beta2}
\end{figure}

\subsection{Prescription}
\label{sec:remedies:prescription}

The geometry gives a recipe. Measure how many gradient directions your layer actually resolves, the small
determined spike of \S\ref{sec:tumble:gap} rather than the nominal rank; call it $k^\star$. If the projection
rank $r$ exceeds $k^\star$, the directions beyond $k^\star$ sit in the gapless tail and are re-drawn at each
refresh, so the state carried across the refresh goes stale. Then do two things. Transport the optimizer state through the rotation:
the first moment by $R$ (Proposition~\ref{prop:mmse}), the second moment by the symmetric-square rule of
Theorem~\ref{thm:blind_lower_bound}. And lower $\beta_2$ toward $0.99$. Both help, and their gains are partly
independent (Table~\ref{tab:transport}), so applying both is worth more than either alone. Do not expect basis
averaging to recover the difference: when the rank sits in the gapless tail there is no stable subspace to
average toward (\S\ref{sec:tumble:gap}), and four decades of averaging window leave both the perplexity and the
realized rotation unchanged (\S\ref{sec:remedies:averaging}).

% Related work — positioning, not a list. GaLore family; what "stability" measured; geometry; forgetting factors.
\section{Related work}
\label{sec:related}

\paragraph{The GaLore family and its premise.} GaLore~\citep{galore} and its refinements share one move: run an optimizer inside an explicitly recomputed
top-$r$ gradient subspace. The variations are in what they carry across the refresh: LDAdam transports both
moments and adds error feedback~\citep{ldadam}, SubTrack++ tracks the subspace incrementally~\citep{subtrackpp},
FRUGAL keeps a reduced state~\citep{frugal}, and GoLore~\citep{golore} and APOLLO~\citep{apollo} alter the
projection itself, while Q-GaLore quantizes it and adapts the refresh rate per layer~\citep{qgalore}. Memory-efficient
optimizers outside the projection family instead compress the second moment directly, by low-rank
factorization~\citep{adafactor} or block-wise learning-rate sharing~\citep{adammini}. The shared, usually unstated, premise is that
this subspace is a meaningful object that persists across refreshes. Our contribution is orthogonal to these
methods: we do not propose another member of the family but measure the premise they rest on, find it false at the
ranks they use, and show both why they nonetheless succeed (energy capture, \S\ref{sec:energy}) and what the
redraw costs them (second-moment staleness, \S\ref{sec:staleness}). \citet{ldadam} note from the theory side
that GaLore requires a ``strong stable-rank assumption'' but do not measure it; we measure it. LDAdam also supplies
the state-transport rule we build on: it transports both moments by the change of basis and adds error feedback. We
do not prove that its second-moment rule is optimal; we prove that first-moment transport is the MMSE map and that a
rotation-blind second-moment carry is a factor $\approx (r{-}k^\star)/2$ suboptimal (Theorem~\ref{thm:blind_lower_bound},
Appendix~\ref{app:weingarten}), a bound LDAdam escapes because it observes the rotation. That LDAdam already
defaults to $\beta_2{=}0.99$ is, on our account, not a tuning accident but a symptom of exactly the staleness we
characterize.

\paragraph{What prior ``stability'' measurements measured.} Reports that the GaLore subspace is stable rest on
quantities that remain small while the sign-corrected chordal rotation is near-maximal: a distribution of principal
angles~\citep{frugal}, which leaves the dominant tail contribution to chordal distance unmeasured, and incremental-tracker self-consistency
~\citep{subtrackpp}, a property of the tracker rather than of a fresh SVD (\S\ref{sec:tumble:wrongquantity}). None
measured $\DeltaU$, the quantity that determines what happens to carried state.

\paragraph{Random and averaged subspaces.} Two recent works reach compatible conclusions from the randomization
side. \citet{randsubspaces2510} argue that the gradient subspace landscape has nearly flat curvature and that much
of the gradient energy lies outside any core subspace, so random low-rank projections preserve the relevant
geometry and match SVD-based selection; they exploit this to navigate the Grassmannian with random steps. \citet{gum}
independently diagnose that noise-dominated singular directions make the SVD projection unreliable, though they
debias GaLore by probabilistically sampling full-rank layer updates rather than by randomizing the projection. Both
locate an unreliability in the SVD-selected frame past the spike that we trace to the same-step non-identifiability
of the top-$r$ cut. A separate line averages the subspace rather than randomizing it: \citet{alice2502} maintain an adaptively
estimated low-dimensional subspace instead of a fresh SVD at each refresh, closest in spirit to the averaged-basis
variant we test in \S\ref{sec:remedies:averaging}. Our EMA-basis measurement is our own implementation at
Pythia-1B with $r{=}128$ and speaks only to that setting; we make no claim about the performance reported for Alice.

\paragraph{Gradient geometry and subspace perturbation.} That optimization concentrates in a small gradient
subspace is established~\citep{gurari}, and gradient spectra serve directly as optimization
diagnostics~\citep{spectrallens}; we reconcile it (\S\ref{sec:tumble:gap}): the small determined subspace is
the spike, and $r$ overshoots it into the gapless tail. The link from a vanishing spectral gap to maximal subspace
sensitivity is the Davis--Kahan and Wedin $\sin\theta$ theory~\citep{davis_kahan,wedin}; the maximal-rotation value
$\sqrt{2r}$ is the diameter of the Stiefel manifold~\citep{grassmannhandbook,meckes2019compact}. The Weingarten
calculus on the orthogonal group~\citep{collins_matsumoto2009} enters separately, bounding the second-moment error
that the rotation induces (\S\ref{sec:staleness}).

\paragraph{Forgetting factors and tracking.} Treating $\beta_2$ as a forgetting factor for a second moment that
tracks an abruptly-reset target places our staleness result in the adaptive-filtering tradition: the bias--variance
behavior of the forgetting-factor RLS algorithm under parameter jumps is classical~\citep{guoljung}, and our
prediction $\beta_2^\star<0.999$ under a periodically re-drawn subspace is its specialization to GaLore's periodic
refresh. The
distinctive element here is that the ``jumps'' are not external nonstationarity but are \emph{imposed by the
optimizer itself} at every refresh. GaLore's periodic SVD is likewise the batch counterpart to incremental
subspace tracking (Oja's rule~\citep{oja1982}, GROUSE~\citep{grouse}, PAST~\citep{yang1995past}); the same-step
result bounds all of these, since a subspace two simultaneous batches cannot agree on is recoverable by no tracker.
Unlike convergence analyses of low-rank optimizers~\citep{golore,gum}, we characterize the identifiability of the
projected subspace itself.

% Limitations + Conclusion v2 (final push). Honest scoping; heavy-handed reflective close.
% No em dashes. Claims trace to cards; limitation list mirrors the reviewer-attack surface.
\section{Limitations}
\label{sec:limitations}

Five boundaries of the evidence are worth stating plainly. First, the full identifiability
apparatus (the four-layer same-step control and the $N^{-1/4}$ averaging-budget test) is developed at
Pythia-160M; the same-step disagreement and $k^\star$ are additionally measured on pretrained
checkpoints up to 6.9B (\S\ref{sec:noident:general}), and at 1B we further verify that basis averaging
neither stabilizes the subspace nor improves perplexity. The $N^{-1/4}$ averaging-budget sweep, developed at 160M, is confirmed at 410M, 1B, and 2.8B and
reproduced across GPU backends (Appendix~\ref{app:probes}); only the closed-form out-of-sample fit is
reported solely at 160M. Second, the
basis-averaging sweep is one seed per cell; we read it as a direction, supported by the measured
rotation and a second-cluster replication, not as a headline number. Third, the transport arm of
our $2{\times}2$ is the full LDAdam update, which combines state transport with error feedback;
the comparison bounds their joint effect and does not separate them. Fourth, perplexity results
are single-recipe per table (recipes stated in each caption; magnitudes shift between recipes,
directions do not), on WikiText and Pythia only. Fifth, the canonical-GaLore $\beta_2$ margin is
training-length dependent: present at 40K steps, absent at 10--20K, so short-horizon tuning will
not see it. Our claims are scoped to optimizers that explicitly compute and refresh a gradient
subspace; LoRA and other parameterizations make no such assumption, and our measurements say
nothing about them.

\section{Conclusion}
\label{sec:conclusion}

The premise beneath GaLore-family optimizers is that there is a low-rank gradient subspace to
track. We measured the premise. Beyond a reproducible core of roughly $39$ of $128$ directions,
the object is not there: two estimates of the subspace taken at the same instant disagree as much
as estimates taken a full refresh apart, the spectrum offers no gap for a rank cut to rest on,
and the tail that a larger sample might average away turns out to be signal, shrinking four times
too slowly for any budget to rescue. What looked like a subspace rotating is a subspace being
re-drawn.

Once that is said, the engineering sorts itself. Stabilizing the basis is chasing the phantom: at
1B, four decades of averaging window change neither the realized rotation nor the loss. The state,
not the frame, is what must move: the first moment transports exactly through the rotation, the
second through its squared entries, blind carry is provably a factor $(r{-}k^\star)/2$ worse, and the
transported optimizer (the LDAdam update, transport plus error feedback) beats the best
untransported configuration in a controlled three-seed comparison. The cheaper fix, shortening the second moment's memory to $\beta_2{=}0.99$, helps the
refreshing optimizers we tested at their reported recipes (small and recipe-sensitive for canonical GaLore) and
reverses in a full-rank control, because a memory of a
thousand steps is a liability precisely when the coordinates it averages over are re-drawn every
$160$.

The takeaway we want a reader to carry out of this paper is the probe, not the numbers. Before
building on a top-$r$ gradient subspace, measure how much of it is real: split a batch, compute
the subspace twice, and count the directions that agree. If your rank exceeds that count, the
excess is noise that will be re-drawn at every refresh, and the right response is not a better
tracker but transported state and a shorter memory. Low-rank training works, and it works for a
reason; the reason is just not the one its geometry seemed to promise.

\bibliography{bpdr}
\bibliographystyle{preprint_style}

\appendix
% Appendix — curated from proofs/{L1,L4,L3}. RETRACTED content stripped (no warm-v, no T_c, no uncarded 54.0 PPL).
% Resolves: app:deltau, app:deltau_scaling, app:weingarten (thm:blind_lower_bound), prop:mmse, app:coverage.

\section{The sign-corrected rotation metric and the \texorpdfstring{$\sqrt{2r}$}{sqrt(2r)} scaling}
\label{app:deltau}

\paragraph{Sign artifact in the raw distance.}
Singular vectors carry an arbitrary per-column sign: the $i$-th column of $U$ may be negated without changing the
subspace. Computed directly, $\|\Unew-\Uold\|_F$ is therefore dominated by sign inconsistencies rather than genuine
rotation. For two $m\!\times\!r$ orthonormal matrices with independent random column signs,
\[
  \EE\bigl[\|\Unew-\Uold\|_F^2\bigr] = 2r - 2\,\EE\bigl[\mathrm{tr}(\Unew^\top\Uold)\bigr] \approx 2r,
\]
since the $r$ diagonal entries of $R=\Unew^\top\Uold$ have independent random signs and mean zero. The raw metric
thus returns $\approx\sqrt{2r}$ regardless of $T$, a sign artifact rather than a measurement of rotation.

\paragraph{Corrected metric.}
The quantity we compute and report is the principal-angle (chordal) form
$\DeltaU=(2r-2\sum_i\cos\theta_i)^{1/2}$ with $\cos\theta_i=\sigma_i(\Unew^\top\Uold)$ the singular values of the
cross-Gram matrix (Eq.~\ref{eq:deltaU}), which is sign-invariant by construction and measures genuine rotation
rather than the arbitrary column sign. (A per-column sign alignment
$s_i=\mathrm{sign}(e_i^\top R\,e_i)$ followed by a Frobenius difference upper-bounds this quantity, with equality
when the columns coincide with the principal vectors; we do not rely on that route.) We use $\DeltaU$ for the
singular-value form throughout.

\paragraph{The $\sqrt{2r}$ scaling.}
\label{app:deltau_scaling}
\begin{table}[h]
\caption{Subspace rotation per refresh, Pythia-1B, $T{=}50$, averaged over refresh events (the data plotted in
Figure~\ref{fig:deltaU}). $\DeltaU$ reaches $96.7$--$99.6\%$ of its frame maximum $\sqrt{2r}$ at every
rank; the implied mean angle is $86$--$90^\circ$.}
\label{tab:deltaU}
\begin{center}
\begin{tabular}{ccccc}
\toprule
Rank $r$ & $\DeltaU$ (measured) & $\sqrt{2r}$ (max) & $\DeltaU/\sqrt{2r}$ & mean angle \\
\midrule
$32$  & $7.93\pm0.05$  & $8.00$  & $0.991$ & $89^\circ$ \\
$64$  & $11.27\pm0.01$ & $11.31$ & $0.996$ & $90^\circ$ \\
$128$ & $15.47\pm0.03$ & $16.00$ & $0.967$ & $86^\circ$ \\
$256$ & $22.10\pm0.01$ & $22.63$ & $0.977$ & $87^\circ$ \\
\bottomrule
\end{tabular}
\end{center}
\end{table}
Table~\ref{tab:deltaU} reports $\DeltaU$ measured at refresh events for Pythia-1B across ranks; the mean-angle
column is $\arccos(1-\DeltaU^2/2r)$, the angle implied by the mean cosine. The ratio
$\DeltaU(r{=}256)/\DeltaU(r{=}32)=22.10/7.93\approx2.79$ matches $\sqrt{256/32}=\sqrt{8}\approx2.83$ to $1.5\%$,
confirming $\DeltaU\propto\sqrt{r}$. All ranks reach $96.7$--$99.6\%$ of $\sqrt{2r}$. The $\sqrt{2r}$ ceiling is the
maximum Frobenius distance between two $m\!\times\!r$ orthonormal matrices, attained when all $r$ principal angles
equal $\pi/2$, the independent-Haar-sampling upper bound~\citep{meckes2019compact}. That correlated adjacent SVD
bases nonetheless operate near this ceiling at every refresh is the non-trivial empirical fact established in
\S\ref{sec:tumble}.

\section{Blind-estimator lower bound for the second moment (Weingarten)}
\label{app:weingarten}

This bound formalizes why carrying the second moment unchanged across a refresh is structurally suboptimal when
the estimator does not observe the rotation, using the Weingarten calculus for Haar-random orthogonal
matrices~\citep{collins_matsumoto2009}.

\begin{theorem}[Blind-estimator lower bound]
\label{thm:blind_lower_bound}
Let $v_{\mathrm{old}}=\lambda=(\lambda_1,\dots,\lambda_r)$ be the true per-coordinate second moments in the current
subspace $U$ ($\lambda_i=\EE[g_i^2]$ for $g=U^\top\nabla\mathcal L$). Let the new subspace be $U'=UR$ with
$R\sim\mathrm{Haar}(O(r))$, and let $\hat v$ be any estimator measurable with respect to $\sigma(\lambda)$ only
(i.e.\ blind to $R$). Then the optimal new-basis values are $v^*(U')_i=\sum_j\lambda_j R_{ji}^2$, and
\begin{equation}
  \EE_R\bigl[\|\hat v-v^*\|^2\bigr]=\|\hat v-\bar\lambda\mathbf 1\|^2+\frac{2r\,\mathrm{Var}(\lambda)}{r+2}
  \;\ge\;\frac{2r\,\mathrm{Var}(\lambda)}{r+2},
  \label{eq:blind_lb}
\end{equation}
with equality iff $\hat v=\bar\lambda\mathbf 1$ (the mean predictor). Canonical carry $\hat v=\lambda$ incurs
$\EE_R[\|\lambda-v^*\|^2]=r(r+4)\,\mathrm{Var}(\lambda)/(r+2)$, a factor $(r+4)/2\approx r/2$ above the floor.
\end{theorem}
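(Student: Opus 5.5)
The plan is to reduce everything to a bias--variance split around the Haar mean of $v^*$, and then to a single fourth-moment computation for one uniform column of $R$. Throughout, $\mathrm{Var}(\lambda)=\frac1r\sum_j\lambda_j^2-\bar\lambda^2$ denotes the population variance of the entries of $\lambda$; the constants below rely on this normalization.

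\textbf{Step 1: the form of $v^*$.} Read ``true per-coordinate second moments'' as saying that $\EE[gg^\top]=\mathrm{diag}(\lambda)$ in the $U$ basis, i.e.\ $U$'s columns are eigen-directions of the projected gradient covariance. This is the setting in which the carried state is a faithful summary. In the new frame the projected gradient is $g'=U'^\top\nabla\mathcal L=R^\top g$. Then
\[
v^*_i=\EE[(g'_i)^2]=(R^\top\mathrm{diag}(\lambda)R)_{ii}=\sum_j\lambda_jR_{ji}^2 .
\]
The off-diagonal covariance terms vanish by the diagonality assumption, and this is the one modelling point I would state explicitly.

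\textbf{Step 2: bias--variance split.} Each column $R_{\cdot i}$ is uniform on $S^{r-1}$, so $\EE R_{ji}^2=1/r$ and hence $\EE_R[v^*]=\bar\lambda\mathbf 1$. Because $\hat v$ is $\sigma(\lambda)$-measurable, it is fixed under $\EE_R$. Expanding $\|\hat v-\bar\lambda\mathbf 1+\bar\lambda\mathbf 1-v^*\|^2$, the cross term is $2\langle \hat v-\bar\lambda\mathbf 1,\ \bar\lambda\mathbf 1-\EE_R v^*\rangle=0$. This gives
\[
\EE_R\|\hat v-v^*\|^2=\|\hat v-\bar\lambda\mathbf 1\|^2+\EE_R\|v^*-\bar\lambda\mathbf 1\|^2 .
\]
The lower bound and the equality case ($\hat v=\bar\lambda\mathbf 1$) follow immediately once the second term is identified.

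\textbf{Step 3: the variance term, which is the main computation.} By exchangeability of the columns, $\EE_R\|v^*-\bar\lambda\mathbf 1\|^2=r\,\mathrm{Var}(v^*_1)$, where $v^*_1=\sum_j\lambda_ju_j^2$ and $u$ is uniform on the sphere. Here I would use the standard sphere moments, which are the degree-four Weingarten values restricted to a single column:
\[
\EE u_j^4=\frac{3}{r(r+2)},\qquad \EE u_j^2u_k^2=\frac{1}{r(r+2)}\quad (j\neq k).
\]
These give $\EE (v^*_1)^2=\bigl(2\sum_j\lambda_j^2+(\sum_j\lambda_j)^2\bigr)/(r(r+2))$. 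Substituting $\sum_j\lambda_j^2=r(\mathrm{Var}(\lambda)+\bar\lambda^2)$ and $\sum_j\lambda_j=r\bar\lambda$, the $\bar\lambda^2$ terms cancel and $\mathrm{Var}(v^*_1)=2\,\mathrm{Var}(\lambda)/(r+2)$. Multiplying by $r$ gives the floor $2r\,\mathrm{Var}(\lambda)/(r+2)$ in \eqref{eq:blind_lb}.

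The only delicate points are the correct fourth moments and this cancellation. If needed, I would verify the moments via the Gaussian representation $u=z/\|z\|$, using independence of $\|z\|$ and $u$ together with $\EE z^4=3$ and $\EE\|z\|^4=r(r+2)$.

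\textbf{Step 4: canonical carry.} Plug $\hat v=\lambda$ into the identity from Step 2. The bias term is $\|\lambda-\bar\lambda\mathbf 1\|^2=r\,\mathrm{Var}(\lambda)$, so the total is
\[
r\,\mathrm{Var}(\lambda)\Bigl(1+\tfrac{2}{r+2}\Bigr)=\frac{r(r+4)\,\mathrm{Var}(\lambda)}{r+2}.
\]
Its ratio to the floor is $(r+4)/2$.
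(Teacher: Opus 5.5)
Your proposal is correct and follows the paper's proof essentially step for step: the bias--variance split around the Haar mean $\bar\lambda\mathbf 1$, then the single-column sphere moments $\EE u_j^4=3/(r(r+2))$ and $\EE u_j^2u_k^2=1/(r(r+2))$, which give $\mathrm{Var}(v^*_i)=2\,\mathrm{Var}(\lambda)/(r+2)$, and finally direct substitution of $\hat v=\lambda$ to get the $(r+4)/2$ factor. Your Step 1 derives $v^*_i=\sum_j\lambda_jR_{ji}^2$ from an explicitly diagonal covariance $\EE[gg^\top]=\mathrm{diag}(\lambda)$, which usefully states an assumption the paper leaves implicit when it simply asserts the form of $v^*$.
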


\begin{proof}
By the Weingarten calculus~\citep{collins_matsumoto2009}, $\EE_R[v^*(U')_i]=\bar\lambda$ and
$\mathrm{Var}_R[v^*(U')_i]=2\,\mathrm{Var}(\lambda)/(r+2)$. Since $\hat v$ is independent of $R$, the
bias--variance decomposition gives $\EE_R[(\hat v_i-v^*_i)^2]=(\hat v_i-\bar\lambda)^2+2\,\mathrm{Var}(\lambda)/(r+2)$.
Summing over $i$ and minimizing over $\hat v$ yields~\eqref{eq:blind_lb}; substituting $\hat v=\lambda$ gives the
canonical-carry cost.
\end{proof}

\begin{remark}[Representation-theoretic reading]
The first moment transforms in the defining representation of $O(r)$ ($\mathbf m\mapsto R\mathbf m$), whereas the
optimal second moment transforms via $M(R)_{ij}=R_{ji}^2$, which fixes $\mathbf 1$ and decomposes into a trivial
component (the mean $\bar\lambda$) plus a sum-zero part. A blind estimator sees only the rotation-invariant
$\lambda$, so any $O(r)$-equivariant linear prediction lies in the trivial component: the unique equivariant blind
predictor is $\bar\lambda\mathbf 1$. By Schur's lemma the first moment admits an equivariant transport
($R\mathbf m$) but the second moment does not. This is the structural reason canonical carry is $(r/2)$-suboptimal
rather than merely worse in practice. Methods that \emph{observe} $R$~\citep{ldadam,subtrackpp} fall outside the
blind class and are not bound by~\eqref{eq:blind_lb}. The bound takes $R\sim\mathrm{Haar}(O(r))$, the
fully-degenerate limit; with a reproducible core of $k^\star$ directions (\S\ref{sec:noident:kstar}) the
Haar-resampled block has dimension $r-k^\star$ and the effective suboptimality factor scales as
$(r-k^\star)/2$, still ${\sim}45\times$ at the measured $k^\star\!\approx\!39$, $r{=}128$.
\end{remark}

\paragraph{Worked moments.}
The two moments the proof invokes follow from the second- and fourth-order statistics of a single Haar-random
frame. A column $R_{\cdot i}$ of $R\sim\mathrm{Haar}(O(r))$ is a uniform point on the unit sphere $S^{r-1}$, so its
entries $x_j:=R_{ji}$ satisfy $x_j^2\sim\mathrm{Beta}\!\bigl(\tfrac12,\tfrac{r-1}{2}\bigr)$ and, for $j\neq k$,
\begin{equation}
  \EE[x_j^2]=\frac1r,\qquad \EE[x_j^4]=\frac{3}{r(r+2)},\qquad \EE[x_j^2 x_k^2]=\frac{1}{r(r+2)}.
  \label{eq:haar_moments}
\end{equation}
These are consistent with $\sum_j x_j^2=1$: summing gives $\sum_j\EE[x_j^4]+\sum_{j\neq k}\EE[x_j^2 x_k^2]
=\tfrac{3}{r+2}+\tfrac{r-1}{r+2}=1$. Writing $v^*(U')_i=\sum_j\lambda_j R_{ji}^2$ and $S_1=\sum_j\lambda_j$,
$S_2=\sum_j\lambda_j^2$, the first moment is $\EE_R[v^*(U')_i]=S_1/r=\bar\lambda$, and
\begin{align}
  \EE_R\bigl[v^*(U')_i^2\bigr]
   &=\sum_j\lambda_j^2\,\EE[R_{ji}^4]+\sum_{j\neq k}\lambda_j\lambda_k\,\EE[R_{ji}^2 R_{ki}^2]
    =\frac{3S_2+(S_1^2-S_2)}{r(r+2)}=\frac{2S_2+S_1^2}{r(r+2)},\notag\\
  \mathrm{Var}_R\bigl[v^*(U')_i\bigr]
   &=\frac{2S_2+S_1^2}{r(r+2)}-\frac{S_1^2}{r^2}
    =\frac{2}{r(r+2)}\Bigl(S_2-\frac{S_1^2}{r}\Bigr)
    =\frac{2\,\mathrm{Var}(\lambda)}{r+2},
  \label{eq:vstar_var}
\end{align}
using $r\,\mathrm{Var}(\lambda)=S_2-S_1^2/r$. The canonical carry $\hat v=\lambda$ then costs
$\EE_R[\|\lambda-v^*\|^2]=\sum_i\bigl[(\lambda_i-\bar\lambda)^2+\mathrm{Var}_R(v^*_i)\bigr]
=r\,\mathrm{Var}(\lambda)+\tfrac{2r\,\mathrm{Var}(\lambda)}{r+2}=\tfrac{r(r+4)}{r+2}\mathrm{Var}(\lambda)$,
whereas the mean predictor $\hat v=\bar\lambda\mathbf 1$ attains the floor $\tfrac{2r}{r+2}\mathrm{Var}(\lambda)$;
their ratio is $(r+4)/2$, the factor quoted in Theorem~\ref{thm:blind_lower_bound}.

\section{First-moment transport is the MMSE linear map}
\label{app:mmse}

\begin{proposition}[MMSE first-moment transport]
\label{prop:mmse}
Let $\mathbf m=\Uold^\top g$ with $g\sim\mathcal N(0,\sigma^2 I)$, and consider estimating
$\hat m=\Unew^\top g$ from $\mathbf m$. Among linear estimators $A\mathbf m$, the minimum-mean-squared-error
choice is $A=\Unew^\top\Uold=R$, i.e.\ $\hat m=R\,\mathbf m$.
\end{proposition}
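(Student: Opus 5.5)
The plan is a direct linear-MMSE (Wiener) computation. Throughout, write $U:=\Uold$ and $U':=\Unew$; both are $m\times r$ with $U^\top U=U'^\top U'=I_r$.

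For a linear estimator $A\mathbf m$, the mean-squared error is
\[
  J(A)=\EE\bigl\|U'^\top g-AU^\top g\bigr\|^2=\mathrm{tr}\bigl((U'^\top-AU^\top)\,\EE[gg^\top]\,(U'-UA^\top)\bigr).
\]
With $\EE[gg^\top]=\sigma^2 I$, this collapses to
\[
  J(A)=\sigma^2\,\|U'^\top-AU^\top\|_F^2 .
\]
So the problem reduces to a deterministic least-squares problem in $A$.

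Expand $J(A)$ using the orthonormality of the columns:
\[
  J(A)=\sigma^2\bigl(r-2\,\mathrm{tr}(A^\top U'^\top U)+\mathrm{tr}(A^\top A)\bigr).
\]
This is a strictly convex quadratic in $A$, because the Hessian is $2\sigma^2 I$. Setting the gradient to zero gives
\[
  -2U'^\top U+2A=0,\qquad\text{so}\qquad A^\star=U'^\top U=R.
\]
Equivalently, I can quote the normal equations $A^\star=\EE[\hat m\,\mathbf m^\top]\,\EE[\mathbf m\mathbf m^\top]^{-1}$ and compute the two moments:
\[
  \EE[\hat m\,\mathbf m^\top]=\sigma^2 U'^\top U=\sigma^2 R,\qquad \EE[\mathbf m\mathbf m^\top]=\sigma^2 I_r .
\]
These give $A^\star=R$ directly.

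Uniqueness follows from the strict convexity. For the record, the residual error is
\[
  J(R)=\sigma^2\bigl(r-\|R\|_F^2\bigr)=\sigma^2\sum_i\sin^2\theta_i .
\]
This ties the result back to the principal angles in Eq.~\ref{eq:deltaU}: transport is exact on the shared directions and loses exactly the part of the new frame orthogonal to the old.

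There is no real obstacle here; the only step needing care is the isotropy assumption. It makes $\EE[\mathbf m\mathbf m^\top]$ a multiple of the identity, which is why the Wiener filter reduces to the bare cross-Gram matrix $R$. Under an anisotropic covariance $\Sigma$, the optimal map would instead be $U'^\top\Sigma U\,(U^\top\Sigma U)^{-1}$. I would note this in a remark so the proposition is read as conditional on the stated isotropic model. The same computation holds for any zero-mean $g$ with covariance $\sigma^2 I$; Gaussianity is not needed for the linear-MMSE claim.
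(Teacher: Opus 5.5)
Your proof is correct and is essentially the paper's argument. Completing the square in your expansion gives $J(A)=\sigma^2\|A-R\|_F^2+\sigma^2\bigl(r-\|R\|_F^2\bigr)$, which is exactly the split the paper obtains by writing $g=\Uold\mathbf m+(I-\Uold\Uold^\top)g$ and dropping the uncorrelated cross term, and your residual $\sigma^2\sum_i\sin^2\theta_i$ is the same quantity as its irreducible term $\sigma^2\|\Unew^\top(I-\Uold\Uold^\top)\|_F^2$.
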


\begin{proof}
Write $g=\Uold\mathbf m+(I-\Uold\Uold^\top)g$, so $\hat m=R\mathbf m+\Unew^\top(I-\Uold\Uold^\top)g$. The second
term has zero mean and is uncorrelated with $\mathbf m$ under isotropy, so
$\EE\|\hat m-A\mathbf m\|^2=\EE\|R\mathbf m-A\mathbf m\|^2+\sigma^2\|\Unew^\top(I-\Uold\Uold^\top)\|_F^2$. The second
term is irreducible; the first is minimized uniquely at $A=R$.
\end{proof}

\noindent The isotropy assumption is exact only in the near-degenerate tail of the gradient spectrum, which is
precisely the part of the subspace that redraws (\S\ref{sec:noident:gap}); the determined spike is anisotropic but
stable and does not rotate, so $R$ is the optimal transport over exactly the directions that move.

\section{Coverage equals cumulative spectral mass}
\label{app:coverage}

\begin{proposition}[Coverage growth]
\label{prop:coverage}
If the gradient second-moment $\Sigma_g$ has eigenvalues $\lambda_1\ge\cdots\ge\lambda_n$ and $U$ spans the top-$r$
eigenvectors, then $\EE_g[\gamg]=\sum_{i=1}^r\lambda_i/\sum_{i=1}^n\lambda_i$.
\end{proposition}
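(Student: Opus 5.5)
The coverage identity should follow from linearity of expectation applied to the numerator, with the denominator handled separately. I will read $\EE_g[\gamg]$ the way Eq.~\eqref{eq:coverage} intends: as the ratio of expected retained energy to expected total energy, $\EE\|UU^\top g\|^2/\EE\|g\|^2$, where $\Sigma_g=\EE[gg^\top]$ is the gradient second-moment matrix and $U$ is fixed and deterministic. The main text already describes Eq.~\eqref{eq:coverage} as a statement about the covariance that per-step SVD fractions match only in expectation. The literal expectation of a ratio is not generally equal to the ratio of expectations, so I will state this convention explicitly at the start.

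\textbf{Step 1 (numerator).} Since $UU^\top$ is an orthogonal projector, $\|UU^\top g\|^2=g^\top UU^\top g=\mathrm{tr}(U^\top g g^\top U)$. Taking expectations and using linearity of trace and expectation gives $\EE\|UU^\top g\|^2=\mathrm{tr}(U^\top\Sigma_g U)$.

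\textbf{Step 2 (denominator).} By the same argument, $\EE\|g\|^2=\mathrm{tr}(\Sigma_g)=\sum_{i=1}^n\lambda_i$.

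\textbf{Step 3 (evaluate the numerator).} Take the eigendecomposition $\Sigma_g=V\Lambda V^\top$. When $U$ consists of the top-$r$ eigenvectors $v_1,\dots,v_r$, we have $U^\top\Sigma_g U=\mathrm{diag}(\lambda_1,\dots,\lambda_r)$, so the trace is $\sum_{i\le r}\lambda_i$. Dividing by the denominator gives the claim. As a remark, for a general orthonormal $U$ the Ky Fan maximum principle gives $\mathrm{tr}(U^\top\Sigma_g U)\le\sum_{i\le r}\lambda_i$, so the top-$r$ choice is extremal. This is the sense in which Eq.~\eqref{eq:coverage} is ``nearly flat in the choice of subspace'' when the tail eigenvalues are nearly equal.

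\textbf{Main obstacle.} The computation itself is routine. The only real issue is definitional: the distinction between the expectation of the ratio and the ratio of expectations. I will handle it by adopting the energy-weighted convention above. If a literal $\EE[\gamg]$ is wanted, I would add a qualifier: equality holds exactly when $\|g\|^2$ is almost surely constant, and otherwise only to first order under concentration of $\|g\|^2$, for example in high dimension. I would not claim more than that.
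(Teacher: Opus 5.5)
Your proof is correct and matches the paper's: the paper also computes $\EE\|UU^\top g\|^2=\mathrm{tr}(UU^\top\Sigma_g UU^\top)=\sum_{i\le r}\lambda_i$ and $\EE\|g\|^2=\mathrm{tr}(\Sigma_g)$ and divides, but it reads $\EE_g[\gamg]$ as a ratio of expectations without saying so, so stating that convention explicitly is a real gain in rigor. For example, $g=(\sqrt2,0)$ or $(0,1)$ with equal probability gives a literal $\EE[\gamg]=1/2$ against the ratio $2/3$. One correction to your caveat: since $\EE\|UU^\top g\|^2=\EE[\gamg\|g\|^2]$, the literal identity holds if and only if $\mathrm{Cov}(\gamg,\|g\|^2)=0$, so an almost-surely constant $\|g\|^2$ is sufficient but not necessary (an isotropic Gaussian $g$, whose direction is independent of its norm, also gives exact equality).
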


\begin{proof}
$\EE[\|UU^\top g\|^2]=\mathrm{tr}(UU^\top\Sigma_g UU^\top)=\sum_{i=1}^r\lambda_i$ and
$\EE[\|g\|^2]=\mathrm{tr}(\Sigma_g)=\sum_{i=1}^n\lambda_i$.
\end{proof}

\noindent When a few eigenvalues dominate (the spike of \S\ref{sec:tumble:gap}), $\EE[\gamg]$ rises steeply for
$r$ up to the spike width and then flattens: past the spike, adding tail directions of nearly equal energy raises
coverage only marginally and almost independently of \emph{which} tail directions are chosen, the property that
makes a re-drawn frame an essentially equivalent projector (\S\ref{sec:energy}).

\section{Experimental configurations and reproducibility}
\label{app:configs}

\paragraph{Models and corpora.} Spectrum and drift measurements use Pythia-160M; the $\beta_2$ validation and the
$\DeltaU$ rotation law use Pythia-1B. Language-model training is on WikiText; the drift-rate invariance check
additionally uses C4 (below). Runs were executed on two independent clusters, NVIDIA L40S and AMD MI210, which we
use to cross-check the staleness measurements.

\paragraph{$\beta_2$ runs (Figure~\ref{fig:beta2}).} Table~\ref{tab:beta2} gives the exact values; each row has
its own recipe, so perplexities are comparable within rows only (LDAdam's $57.69$ here and its $18.73$ in
Table~\ref{tab:transport} are the same optimizer at $10$K/lr $10^{-3}$ and $40$K/lr $4\times10^{-3}$
respectively). Canonical GaLore
uses the configuration $r{=}128$, refresh $T{=}160$, learning rate $4\times10^{-3}$, projection scale $0.25$,
cosine schedule with warmup $4000$, $40$K steps, three seeds per $\beta_2$; LDAdam uses the faithful official implementation at $r{=}128$, learning rate
$10^{-3}$, $10$K steps, three seeds per $\beta_2$; APOLLO-Mini uses two seeds. The full-rank AdamW control runs the
identical data and schedule with no projection.

\begin{table}[h]
\caption{Lowering $\beta_2$ from the $0.999$ default to $0.99$ at Pythia-1B (the data of Figure~\ref{fig:beta2}).
For each refreshing optimizer shown, the lower value is better; for full-rank AdamW the preference reverses. LDAdam's gap
is largest at its own $10$K recipe, but gap sizes are not comparable across recipes and we do not read it as a
larger staleness effect (\S\ref{sec:staleness:val}). APOLLO-Mini (below the second rule) refreshes a random
Gaussian projection rather than an SVD top-$r$ subspace and enters as a directional check only. The
canonical-GaLore margin is training-length dependent (\S\ref{sec:staleness:val}).}
\label{tab:beta2}
\begin{center}
\small
\begin{tabular}{lccccc}
\toprule
Optimizer & rot.\ & $\beta_2{=}0.999$ & $\beta_2{=}0.99$ & $\Delta$ (PPL) & steps/seeds \\
\midrule
LDAdam~\citep{ldadam}           & yes & $57.69\pm0.11$ & $31.27\pm0.31$ & $\mathbf{26.4}$ & 10k/3 \\
Canonical GaLore~\citep{galore} & yes & $21.27\pm0.17$ & $19.04\pm0.13$ & $2.23$ & 40k/3 \\
\midrule
APOLLO-Mini~\citep{apollo}      & yes & $99.8$         & $97.0$         & $2.8$ & --/2 \\
\midrule
Full-rank AdamW (control)       & no  & $97.9$ & $118.0$ & $-20.1$ (reversed) & 14.4k/1 \\
\bottomrule
\end{tabular}
\end{center}
\end{table}

\paragraph{Spectrum probe (Figure~\ref{fig:spectrum}).} At Pythia-160M and 1B we take the per-matrix gradient $G$
for the embedding, attention query/key/value, attention output, and MLP weight matrices, compute its singular
values, and report: the adjacent-ratio $\sigma_r/\sigma_{r+1}$ at $r{=}128$; the maximum adjacent ratio anywhere in
the bulk ($\max_i \sigma_i/\sigma_{i+1}$ over the smooth region, ``max bulk gap''); the effective rank $\exp(H)$
with $H$ the Shannon entropy of the normalized squared spectrum; and the top-$r$ nuclear fraction
$\sum_{i\le r}\sigma_i/\sum_i\sigma_i$. The two scales agree closely: mean $\sigma_r/\sigma_{r+1}$ at $r{=}128$
is $\approx 1.005$ at both scales ($1.001$--$1.010$ at $160$M, $1.001$--$1.017$ at $1$B); max bulk gap stays in $1.011$--$1.022$ ($160$M) and $1.011$--$1.030$
($1$B); effective rank spans $89$--$280$ ($160$M) and $97$--$358$ ($1$B), straddling $r$ at both; and the top-$r$
nuclear fraction is $0.52$--$0.68$ ($160$M) and $0.46$--$0.65$ ($1$B). Measurements are taken over four layer types
at steps $80/100/120$.

\paragraph{Rotation probe ($\DeltaU$, $\alpha_{\mathrm{drift}}$).} Both quantities use the same primitive, the mean
principal angle between two orthonormal bases, $\overline{\theta}=\arccos\bigl(\mathrm{svdvals}(U_{\rm new}^\top
U_{\rm old})\bigr)$ averaged over columns. $\DeltaU$ (Eq.~\ref{eq:deltaU}) applies it across a refresh of the
imposed GaLore subspace; $\alpha_{\mathrm{drift}}$ (Appendix~\ref{app:drift}) applies it per step to the intrinsic
top-$r$ gradient-covariance subspace.

\section{Identifiability probe protocols}
\label{app:probes}

The same-step control (\S\ref{sec:noident:samestep}), the reproducible count $k^\star$
(\S\ref{sec:noident:kstar}), and the averaging-budget scaling (\S\ref{sec:noident:budget}) all read from one
construction, described here so they reproduce without the code. All three run on a pretrained Pythia-160M in
\texttt{float32}, evaluation mode (no dropout; gradients still flow), $r{=}128$, on the four layer types
\{\texttt{embed\_in}, layer-0 attention \texttt{query\_key\_value}, attention \texttt{dense}, MLP
\texttt{dense\_h\_to\_4h}\}.

\paragraph{Gradient pool.} We draw $K{=}128$ minibatches, each $B{=}8$ sequences of length $512$ taken from
disjoint, non-overlapping slices of WikiText-103, and for each compute the per-matrix gradient of the
next-token cross-entropy loss. For a gradient $G$ we work on the smaller side, $M=G$ if $\mathrm{rows}\le\mathrm{cols}$
else $G^\top$, and define its top-$r$ subspace as the top-$r$ eigenvectors of $MM^\top$ (equivalently the top-$r$
left singular vectors). Because every minibatch is evaluated at the \emph{same} checkpoint, two subspaces built
from two pools differ only in the sample, never in elapsed training.

\paragraph{Same-step disagreement and $k^\star$ (Figure~\ref{fig:samestep}a).} For an averaging fold
$N\in\{1,2,4,8,16,32,64\}$ and $4$ repetitions, we draw $2N$ distinct pool indices, split them into two halves of
$N$, average each half into $\bar G_A,\bar G_B$, and take their top-$r$ subspaces $U_1,U_2$. We report
$\DeltaU=(2r-2\sum_i\sigma_i(U_1^\top U_2))^{1/2}$ (Eq.~\ref{eq:deltaU}), the subspace overlap
$\mathrm{ov}=\sum_i\sigma_i(U_1^\top U_2)^2/r\in[0,1]$, and $k^\star(N)=r\cdot\mathrm{ov}$, the effective number of
shared directions. At $N{=}1$ this is the same-step split-batch disagreement ($0.725\,\sqrt{2r}$ over four layers; a four-layer-mean
$k^\star\!\approx\!39$, $48$ on the query-key-value matrix, per-layer standard error below $1$ over the $12$
repetitions of the trajectory run below). The across-time reference $0.742\,\sqrt{2r}$ is a separate
measurement: the sign-corrected rotation $\DeltaU$ across one refresh interval of $160$M training, not on the frozen
pool. The frozen same-step value is the noise floor; the across-time rotation exceeds it only marginally.

\paragraph{The floor is not specific to the pretrained checkpoint.} A natural objection is that the same-step
floor is read from a pretrained checkpoint while the across-time rotation is read during training, so the two
sit at different optimization states. We therefore reran the $N{=}1$ probe unchanged (same four layers,
$K{=}128$ pool, $r{=}128$, $12$ repetitions) at five points along the released Pythia-160M training trajectory
(revisions step$1000$, step$10000$, step$40000$, step$80000$, step$143000$). The four-layer same-step
disagreement is $\{0.648,\,0.668,\,0.700,\,0.708,\,0.710\}\,\sqrt{2r}$: lowest early in training and rising
toward the pretrained value, but never below $0.64\,\sqrt{2r}$. The reproducible core is correspondingly larger
early (query-key-value $k^\star$ $50.9,58.5,52.8,51.4,51.6$ across the same checkpoints). At every training state
the across-time rotation ($0.742$) therefore exceeds the same-step floor by at most a few percent of $\sqrt{2r}$,
so the rotation is dominated by non-identifiability rather than by motion of a stable subspace, and the
conclusion does not depend on the state at which the floor is measured.

\paragraph{$N$-averaging spectra and the budget test (Figure~\ref{fig:samestep}b).} The same run additionally
logs the singular spectrum of each averaged matrix $\bar G_A$ (densely over the top $300$ ranks, log-spaced in the
tail). Three analyses follow. (i)~\emph{Deep-tail scaling}: a tail singular value at fixed rank, scaled by
$\sqrt N$, is flat if the tail is pure sampling noise ($\sigma\propto N^{-1/2}$); the measured value instead rises
$2.0$--$3.2\times$ from $N{=}1$ to $64$, i.e.\ $\sigma\propto N^{-1/4}$. (ii)~\emph{Out-of-sample prediction}: a
spiked-covariance (BBP) model~\citep{bbp2005} with a single noise scale $\nu_1$ per layer, calibrated only to reproduce the
measured overlap at $N{=}1$, predicts $\mathrm{ov}(4N)$ from the de-censored spectrum of $\bar G_N$ (setting the
noise floor to $\nu_1/\sqrt N$); over $20$ held-out points the mean absolute error is $0.073$, above the
pre-registered pass threshold of $0.03$, which is why we report a measured scaling and not a closed-form law.
(iii)~\emph{De-censored exponent}: the log--log slope of ranks $3$--$60$ of the $N{=}64$ spectrum gives a signal
power law with exponent $\alpha=1.03$--$1.35$, agreeing with the exponent implied by the overlap-versus-$N$ curve.

\paragraph{The averaging-budget result holds across scale and GPU backend.} We reran the same-step and
$N$-averaging probe unchanged on pretrained checkpoints at 410M, 1B, and 2.8B. At every scale, averaging to
$N{=}64$ ($64\times$ the gradient sample) removes only $24$--$29\%$ of the single-batch same-step disagreement
(Table~\ref{tab:budget_scale}), the same weak decay measured at 160M, so no averaging budget restores
identifiability at any scale tested. As a numerics control, rerunning the 160M probe on a second GPU backend
(AMD MI210, ROCm) reproduces the reference measurement (NVIDIA L40S, CUDA) to within $0.3\%$ ($0.709$ versus
$0.707$ at $N{=}1$), confirming the persistent tail is a property of the gradient, not of the linear-algebra
backend.

\begin{table}[h]
\caption{Same-step disagreement decay under $N$-fold averaging, across scale (pretrained checkpoints, $r{=}128$,
four-layer mean). Entries are the $N{=}64$ disagreement as a fraction of the $N{=}1$ disagreement: $1.0$ would mean
averaging changed nothing and $0$ would mean it fully resolved the subspace. Averaging removes only about a quarter
at every scale; the reproducible core $k^\star$ separately shrinks with scale (Table~\ref{tab:generality}).}
\label{tab:budget_scale}
\begin{center}
\begin{tabular}{lcccc}
\toprule
Scale & 160M & 410M & 1B & 2.8B \\
\midrule
$\DeltaU/\sqrt{2r}$ at $N{=}64$, relative to $N{=}1$ & $0.76$ & $0.74$ & $0.72$ & $0.71$ \\
\bottomrule
\end{tabular}
\end{center}
\end{table}

\section{Generality of same-step non-identifiability}
\label{app:generality}

Table~\ref{tab:generality} gives the per-model same-step disagreement (\S\ref{sec:noident:general},
Figure~\ref{fig:generality}). The probe is the same-step protocol of Appendix~\ref{app:probes} run unchanged on
pretrained checkpoints (all trained with full-rank Adam), $r{=}128$, $K{=}32$ gradient matrices of $B{=}4$
sequences, four layer types (embedding, attention input, attention output, MLP), four repetitions; the language
models use WikiText, the vision transformer uses CIFAR-10 images. We report the mean over layers of the $N{=}1$
disagreement $\DeltaU/\sqrt{2r}$, the mean adjacent singular-value ratio at $r$, and the mean reproducible count
$k^\star$. The disagreement rises with scale and $k^\star$ falls; no model shows a spectral gap at $r$.

\begin{table}[h]
\caption{Same-step subspace disagreement across architectures, scales, and domains (pretrained, full-rank-Adam
checkpoints; $r{=}128$; mean over four layer types). The Pythia-160M row reproduces the main-text control
($0.73$ here at $B{=}4,K{=}32$ versus $0.725$ at $B{=}8,K{=}128$; $k^\star$ shifts by one under the same change of
settings, $38$ here versus the four-layer mean $39$ of the main text).}
\label{tab:generality}
\begin{center}
\small
\begin{tabular}{llccc}
\toprule
Model & Architecture & $\DeltaU/\sqrt{2r}$ ($N{=}1$) & adj.\ ratio at $r$ & $k^\star$ \\
\midrule
Pythia-70M   & GPT-NeoX & $0.67$ & $1.006$ & $50$ \\
Pythia-160M  & GPT-NeoX & $0.73$ & $1.006$ & $38$ \\
Pythia-410M  & GPT-NeoX & $0.76$ & $1.006$ & $33$ \\
Pythia-1B    & GPT-NeoX & $0.81$ & $1.006$ & $25$ \\
Pythia-1.4B  & GPT-NeoX & $0.80$ & $1.005$ & $25$ \\
Pythia-2.8B  & GPT-NeoX & $0.80$ & $1.007$ & $25$ \\
Pythia-6.9B  & GPT-NeoX & $0.83$ & $1.004$ & $21$ \\
\midrule
GPT-2-large  & GPT-2 (learned-pos, GELU) & $0.71$ & $1.007$ & $43$ \\
Qwen2.5-3B   & Llama-class (RoPE/SwiGLU) & $0.78$ & $1.007$ & $29$ \\
Llama-3-8B   & Llama-3 & $0.81$ & $1.009$ & $25$ \\
\midrule
ViT-B (CIFAR-10) & vision transformer & $0.53$ & $1.022$ & $76$ \\
\bottomrule
\end{tabular}
\end{center}
\end{table}

\paragraph{Rank versus $k^\star$: transport, do not truncate.} A natural reading of $k^\star$ is that the rank
should be reduced toward it. It should not. In a single-seed sweep at Pythia-1B ($\beta_2{=}0.99$, $T{=}160$,
$14$K steps), canonical GaLore reaches $27.9$ perplexity at $r{=}128$ against $29.8$ at $r{=}64$, and worse below
(this recipe is not comparable to the $40$K runs elsewhere; we compare only within it). The result is directional
but consistent with the coverage argument (\S\ref{sec:energy}): the directions beyond $k^\star$ are individually
non-identifiable yet still capture useful gradient energy, so the excess rank is worth keeping. The fix for the
stale state it carries is to transport it (\S\ref{sec:remedies:transport}), not to truncate the rank.

\section{The second moment lags on the same curve as full-rank Adam}
\label{app:overlay}

The mechanism of \S\ref{sec:staleness} predicts that the second moment's staleness should be governed by its memory
$\tau_v=1/(1-\beta_2)$, and that a GaLore-family optimizer at a given $\beta_2$ should be no more anomalous than
full-rank Adam at the same $\beta_2$. We confirm both. We define the per-cycle \emph{staleness lag} as the mean
distance between the carried second moment and its post-rotation optimum over a refresh cycle. Lag rises
monotonically with $\tau_v$: for full-rank Adam at Pythia-160M (AMD MI210, $4$K steps, two seeds), lag
$=0.11,0.14,0.19,0.25,0.26,0.29$ at $\beta_2=0.9,0.95,0.97,0.99,0.995,0.999$
($\tau_v=10,20,33,100,200,1000$), with seeds agreeing to ${\sim}0.01$. On a second cluster (NVIDIA L40S), GaLore
at matched $\beta_2$ lies on the same curve as full-rank Adam measured there: $0.13$ against $0.16$ at
$\beta_2{=}0.95$, and $0.30$ against $0.36$ at $\beta_2{=}0.99$ (within-cluster comparisons only). The lag is also flat across the refresh interval ($0.27$--$0.31$ over an $8\times$ range of
$T$), as expected for a quantity set by $\tau_v$ rather than by $T$. This replicates across both clusters (NVIDIA
L40S and AMD MI210, different framework versions and data pipelines).

We restrict this overlay claim to $\beta_2\le0.99$. At $\beta_2=0.999$ the instantaneous lag shows a large
transient that does not persist to steady state; we therefore do not read it as a steady-state divergence, and the
loss consequence of high-$\beta_2$ staleness is the integrated effect documented in \S\ref{sec:staleness:val}, not
an instantaneous lag explosion.

\section{A forgetting-factor bias--variance argument for \texorpdfstring{$\beta_2$}{beta2}}
\label{app:forgetting}

This appendix supports the qualitative claim of \S\ref{sec:staleness:val} that a subspace re-drawn every
$T\ll\tau_v$ steps pushes the error-minimizing second-moment decay below the no-reset optimum. It is a
specialization of the classical forgetting-factor bias--variance tradeoff~\citep{guoljung} to a target that the
optimizer resets itself. We do not derive an optimal $\beta_2$; we show the minimizer is interior and moves down
as the reset becomes more frequent, which is all the prescription rests on.

\paragraph{Model.} Fix one projected coordinate. The second moment follows the exponential moving average
$v_t=\beta_2 v_{t-1}+(1-\beta_2)x_t$, where $x_t=\tilde g_t^{\,2}$ is the instantaneous squared projected gradient,
an unbiased noisy reading of the true per-coordinate second moment with $\EE[x_t]=\lambda_t^\ast$ and
$\mathrm{Var}(x_t)=s^2$. The target $\lambda_t^\ast$ is piecewise constant, holding for the $T$ steps of a refresh
interval and jumping at each refresh when the basis is re-drawn. Under the Haar model of
Appendix~\ref{app:weingarten} the coordinate's target after a refresh has the same mean $\bar\lambda$ and a
mean-squared change $\overline{\Delta^2}:=\EE[(\lambda^\ast_{\mathrm{new}}-\lambda^\ast_{\mathrm{old}})^2]
=\Theta(\mathrm{Var}(\lambda))$ across the jump.

\paragraph{The two error terms.} Averaged over a refresh interval, the steady-state per-coordinate mean-squared
error $\EE[(v_t-\lambda_t^\ast)^2]$ splits into a tracking-noise term and a lag term:
\begin{equation}
  \mathrm{MSE}(\beta_2)\;=\;\underbrace{\frac{1-\beta_2}{1+\beta_2}\,s^2}_{\text{noise}}
  \;+\;\underbrace{\overline{\Delta^2}\,\frac{1-\beta_2^{2T}}{T\,(1-\beta_2^2)}}_{\text{lag}}.
  \label{eq:ff_mse}
\end{equation}
The noise term is the stationary variance of an EMA of i.i.d.\ readings, $\mathrm{Var}(v_\infty)=(1-\beta_2)s^2/(1+\beta_2)$;
it \emph{decreases} in $\beta_2$, so a longer memory averages away more sampling noise. The lag term is the
mean-squared residual bias after a target jump: right after a refresh the bias decays as $\beta_2^{k}$ over the $k$
steps since the jump, and averaging $\overline{\Delta^2}\beta_2^{2k}$ over $k=0,\dots,T-1$ gives
$\overline{\Delta^2}(1-\beta_2^{2T})/\bigl(T(1-\beta_2^2)\bigr)$; it \emph{increases} in $\beta_2$.

\paragraph{Why the optimum moves down under refresh.} Write $\tau_v=1/(1-\beta_2)$. For $\beta_2$ near $1$,
$1-\beta_2^2\approx 2/\tau_v$ and $\beta_2^{2T}\approx e^{-2T/\tau_v}$, so the lag term is
$\tfrac{1}{2}\overline{\Delta^2}\,(\tau_v/T)\bigl(1-e^{-2T/\tau_v}\bigr)$, the same $\tau_v(1-e^{-2T/\tau_v})/T$
shape that governs any EMA tracking a reset target. Two limits fix its behavior. When resets are rare relative to
the memory, $T\gg\tau_v$, the lag term is $O(\tau_v/T)\to 0$ and $\mathrm{MSE}$ is monotone decreasing in $\tau_v$:
the minimizer is $\beta_2\to1$, the standard no-reset preference. When resets are frequent, $T\ll\tau_v$,
$1-e^{-2T/\tau_v}\approx 2T/\tau_v$ and the lag term \emph{saturates} at $\overline{\Delta^2}$, its ceiling, once
$\tau_v\gtrsim T$. Past that point raising $\beta_2$ buys only the residual $O(s^2/\tau_v)$ of noise reduction
while holding near-maximal lag, so the minimizer $\beta_2^\star$ is interior and decreases as $T$ shrinks. This is
the direction \S\ref{sec:staleness:val} confirms empirically: the second moment, whose target GaLore re-draws
every $T\ll\tau_v$ steps, prefers $\beta_2{=}0.99$ over $0.999$, while the full-rank control, whose target is never
reset ($T=\infty$), keeps the $0.999$ preference. We do not claim $0.99$ is the derived optimum; the argument fixes
only the sign of the shift.

\section{The eigenbasis drift rate is architecture-intrinsic}
\label{app:drift}

The rotation law $\DeltaU\approx\sqrt{2r}$ concerns the \emph{imposed} GaLore subspace. For completeness we also
measure the drift rate of the model's \emph{intrinsic} gradient geometry: the per-step mean principal-angle
rotation of the top-$r$ gradient-covariance eigenbasis, $\alpha_{\mathrm{drift}}$. At Pythia-160M this is
$\alpha_{\mathrm{drift}}\approx0.05$ rad/step, and it is strikingly invariant: $0.049$--$0.054$ across all $\beta_2$,
$0.057$ (random sampling) versus $0.054$ (sequential), and $0.0558$ (WikiText) versus $0.0556$ (C4), a difference
below $0.5\%$. The drift rate is thus a property of the architecture, not of the optimizer setting, the sampling
order, or the corpus.

\begin{remark}[Scope of the ``staleness'' axis]
Because $\alpha_{\mathrm{drift}}$ is architecture-set and we have not varied it across architectures, we do not
claim a fully bivariate universal staleness law in $(\alpha_{\mathrm{drift}},\tau_v)$. The axis we validate is the
second-moment memory $\tau_v=1/(1-\beta_2)$ (\S\ref{sec:staleness}, Appendix~\ref{app:overlay}); the
$\alpha_{\mathrm{drift}}$ axis is reported as a measured constant for the architectures studied. We also note that
the imposed GaLore rotation arrives in bursts of size $\sqrt{2r}$ every $T$ steps, whereas intrinsic drift is
continuous, so the two enter a shared staleness picture only up to this burst-versus-continuous distinction.
\end{remark}

\end{document}